\newcommand{\bk}{\boldsymbol{k}}
\newcommand{\bx}{\boldsymbol{x}}
\newcommand{\by}{\boldsymbol{y}}
\newcommand{\bw}{\boldsymbol{w}}
\newcommand{\argmin}{\mathop{\text{argmin}}}
\newcommand{\field}[1]{\mathbb{#1}}
\newcommand{\fY}{\field{Y}}
\newcommand{\fX}{\field{X}}
\newcommand{\R}{\field{R}}
\newcommand{\E}{\field{E}}
\newtheorem{lemma}{Lemma}
\newtheorem{theorem}{Theorem}
\newtheorem{cor}{Corollary}
\theoremstyle{note}
\newtheorem{assumption}{Assumption}
\newcommand{\sign}{{\rm sign}}
\DeclareMathOperator{\Tr}{Tr}
\newcommand{\Risk}{\mathcal{R}}
\newcommand{\Ltworho}{\mathcal{L}^2_{\rho_\fX}}
\newcommand{\frho}{f_\rho}
\newcommand{\HK}{\mathcal{H}_K}
\def\lt{\left}
\def\rt{\right}
\newcommand{\fr}[2]{ { \frac{#1}{#2} }}
\def\lam{\ensuremath{\lambda}}
\def\eps{\ensuremath{{\epsilon}}}
\def\T{\ensuremath{\top}}  
\def\rarrow{{\ensuremath{\rightarrow}}}
\title{Kernel Truncated Randomized Ridge Regression:\\ Optimal Rates and Low Noise Acceleration} 
\author{
Kwang-Sung Jun\\
Boston University\\
\texttt{kjun@bu.edu}
\and
Ashok Cutkosky \\
Google \\
\texttt{ ashok@cutkosky.com}
\and
Francesco Orabona\\
Boston University \\
\texttt{francesco@orabona.com}
}
\begin{document}

\maketitle

\begin{abstract}
In this paper, we consider the nonparametric least square regression in a Reproducing Kernel Hilbert Space (RKHS). We propose a new randomized algorithm that has optimal generalization error bounds with respect to the square loss, closing a long-standing gap between upper and lower bounds. Moreover, we show that our algorithm has faster finite-time and asymptotic rates on problems where the Bayes risk with respect to the square loss is small. We state our results using standard tools from the theory of least square regression in RKHSs, namely, the decay of the eigenvalues of the associated integral operator and the complexity of the optimal predictor measured through the integral operator.
\end{abstract}

\section{Introduction}

Given a training set $S=\{\bx_t,y_t\}_{t=1}^n$ of $n$ samples drawn identically and independently distributed from a fixed but unknown distribution $\rho$ on $\fX \times \fY$, the goal of nonparametric least square regression is to find a function $\hat{f}$ whose risk
\[
\Risk(\hat{f}):=\int_{\fX \times \fY} \left(\hat{f}(\bx)-y\right)^2 d \rho
\]
is close to the optimal risk
\[
\Risk^\star:= \inf_{f} \Risk(f)~.
\]
We focus on the kernel-based methods, which consider candidate functions from a Reproducing Kernel Hilbert Space (RKHS) of functions and possibly their composition with elementary functions.

A classic kernel-based algorithm for nonparametric least squares is Kernel Ridge Regression (KRR), which constructs the prediction function $\hat{f}$ as
\[
\hat{f} = \argmin_{f \in \HK} \ \lambda \|f\|^2 + \frac{1}{n} \sum_{t=1}^n (f(\bx_t)- y_t)^2,
\]
where $\HK$ is a RKHS associated with a kernel $K$ and $\lambda$ is the hyperparameter controlling the amount of regularization.

It has been proved that, when the amount of regularization is chosen optimally and under similar assumptions, KRR converges to the Bayes risk at the best known rate among kernel-based algorithms~\citep{LinRRC18}. Despite this result, kernel-based learning is still not a solved problem: these rates match the known lower bounds only in some regimes, unless additional assumptions are used~\citep{SteinwartHS09}. Indeed, it was not even known if the lower bound was optimal in all the regimes~\citep{Pillaud-VivienRB18}.

Moreover, recent empirical results have also challenged the theoretical results. 
In particular, KRR without regularization seems to perform very well on real datasets~\citep{zhang17understanding,BelkinMM18}, at least in the classification setting, and even outperform KRRs with any nonzero regularization in a popular computer vision dataset~\citep[Figure 1]{liang18just}.
This challenges the theoretical findings because our current understanding of kernel-based learning tells us that a non-zero regularization is needed in all cases for learning in infinite dimensional RKHSs. Given the current gap in upper and lower bounds, it is unclear if this mismatch between theory and practice is due to ($i$) suboptimal analyses that lead to suboptimal choices of the amount of regularization or ($ii$) not taking into account crucial data-dependent quantities (e.g., capturing ``easiness' of the problem) that allow fast rates and minimal regularization.

In this work, we address all these questions. We propose a new kernel-based learning algorithm named Kernel Truncated Randomized Ridge Regression (KTR$^3$). We show that the performance of KTR$^3$ is minimax optimal, matching known lower bounds. This closes the gap between upper and lower bounds, without the need for additional assumptions. Moreover, we show that the generalization guarantee of KTR$^3$ accelerates when the Bayes risk is zero or close to zero. As far as we know, the phenomenon is new in this literature.
Finally, we identify a regime of easy problems in which the best amount of regularization is exactly zero.
  
Another important contribution lies in our proof methods, which vastly differ from the usual one in this field. In particular, we use methods from the online learning literature that make the proof very simple and rely only on population quantities rather than empirical ones. We believe the community of nonparametric kernel-regression will greatly benefit from the addition of these new tools.

The rest of the paper is organized as follows: In the next section, we formally introduce the setting and our assumptions. In Section~\ref{sec:main} we introduce our KTR$^3$ algorithm and its theoretical guarantee, and in Section~\ref{sec:rel} the precise comparison with similar results. In Section~\ref{sec:exp}, we empirically evaluate our findings. Finally, Section~\ref{sec:conc} discusses open problems and future directions of research.

\section{Setting and Notation: Source Condition and Eigenvalue Decay}

In this section, we formally introduce our learning setting and our characterization of the complexity of each regression problem. This characterization is standard in the literature on regression in RKHS, see, e.g.,~\citet{SteinwartC08,SteinwartHS09,DieuleveutB16,LinRRC18}.

Let $\fX \subset \R^d$ a compact set and $\HK$ a separable RKHS associated to a Mercer kernel $K:\fX \times \fX \rightarrow \R$ implementing the inner product $\langle \cdot, \cdot\rangle$ and induced norm $\|\cdot\|$. The inner product is defined so that it satisfies the reproducing property, $\langle K(\bx,\cdot), f(\cdot)\rangle=f(\bx)$.
Denote by $K_t \in \R^{t \times t}$ the \emph{Gram matrix} such that $K_{i,j}= K(\bx_i, \bx_j)$ where $\bx_i,\bx_j$ belong to $S_t \subseteq S$ that contains the first\footnote{Note that the ordering of the elements in $S$ is immaterial, but our algorithm will depend on it. So we can just consider $S$ ordered according to an initial random shuffling.} $t$ elements of $S$.

Our first assumption is related to the boundedness of the kernel and labels.
\begin{assumption}[Boundedness]
\label{assumption:bounded}
We assume $K$ to be bounded, that is, $\sup_{\bx \in \fX} K(\bx, \bx) = R^2 < \infty$. To avoid superfluous notations and without loss of generality, we further assume $R=1$.
We also assume the labels to be bounded: $\fY=[-Y,Y]$ where $Y<\infty$.
\end{assumption}

Denote by $\rho_\fX$ the marginal probability measure on $\fX$ and let $\Ltworho$ be the space of square
integrable functions with respect to $\rho_\fX$. We will assume that the support of $\rho_\fX$ is $\fX$, whose norm is denoted by $\|g\|_\rho:=\sqrt{\int_\fX g^2(\bx) d \rho_\fX}$.
It is well known that the function minimizing the risk over all functions in $\Ltworho$ is $\frho(\bx):= \int_\fY y d \rho(y|\bx)$, which has the \emph{Bayes risk} with respect to the square loss, $\Risk^\star=\Risk(\frho)=\inf_{f \in \Ltworho} \Risk(f)$.

If we use a Universal Kernel (e.g., the Gaussian kernel)~\citep{Steinwart01} and $\fX$ is a compact, we have that $\inf_{f\in \HK} \Risk(f)=\Risk^\star$~\citep[Corollary 5.29]{SteinwartC08}. This suggests that using a universal kernel is somehow enough to reach the Bayes risk. However, while $\frho \in \Ltworho$, this actually does not imply that $\frho \in \HK$ but only that $\frho \in \widebar{\HK}$, which is the closure of $\HK$. 
Thus, the question of whether it is possible to achieve the Bayes risk is relevant even for Universal kernels. 
We address this by the standard parametrization called \emph{source condition} that smoothly characterizes whether $\frho$ belongs or not to $\HK$.
To introduce the formalism, let $L_K : \mathcal{L}^2_{\rho_\fX} \rightarrow \mathcal{L}^2_{\rho_\fX}$ be the integral operator defined by $(L_K f) (\bx) = \int_\fX K(\bx,\bx') f(\bx') d \rho_\fX (\bx')$. 
There exists an orthonormal basis $\{\Phi_1, \Phi_2, \cdots\}$ of $\Ltworho$ consisting of eigenfunctions of $L_K$ with corresponding non-negative eigenvalues $\{\lambda_1, \lambda_2, \cdots\}$ and the set $\{\lambda_i\}$ is finite or $\lambda_k \rightarrow 0$ when $k\rightarrow \infty$~\cite[Theorem~4.7]{CuckerZ07}.
Since $K$ is a Mercer kernel, $L_K$ is compact and positive. 
Moreover, given that we assumed the kernel to be bounded, $L_K$ is trace class, hence compact~\citep{SteinwartC08}.
Therefore, the fractional power operator $L^\beta_K$ is well-defined for any $\beta \geq 0$. We indicate its range space by
\begin{equation}
\label{eq:range_space}
L_K^\beta(\Ltworho):=\bigg\{ f=\sum_{i=1}^{\infty} \lambda_i^\beta a_i \Phi_i \ : \|L_K^{-\beta} f\|_\rho^2:=\ \sum_{i=1}^\infty a^2_i < \infty \bigg\}~.
\end{equation}
This space has a key role in our analysis. In particular, we will use the following assumption.
\begin{assumption}[Source Condition]
\label{assumption:L_K}
Assume that $\frho \in L^{\beta}_K(\Ltworho)$ for $0<\beta\leq \frac{1}{2}$, which is
\[
\exists g \in \Ltworho \  : \ \frho=L^{\beta}_K (g)~.
\]
\end{assumption}
Note that the assumption above is always satisfied for $\beta=0$ because, by definition of the orthonormal basis, $L^{0}_K(\Ltworho)=\Ltworho$.
On the other hand, we have that $L^{1/2}_K(\Ltworho)=\HK$, that is every function $f \in \HK$ can be written as $L^{1/2}_K g$ for some $g \in \Ltworho$, and $\|f\| = \|L_K^{-1/2} f\|_\rho$~\citep[Corollary 4.13]{CuckerZ07}. Hence, \emph{the values of $\beta$ in $[0, \tfrac{1}{2}]$ allow us to consider spaces in between $\Ltworho$ and $\HK$}, including the extremes. Thus, a bigger $\beta$ means a simpler function $\frho$.

Another assumption needed to characterize the learning process is on the \emph{complexity of the RKHS itself}, rather than on the complexity of the optimal function. This is typically done assuming that the eigenvalue of the integral operator satisfies a certain rate of decay. We will use equivalent condition, assuming that the trace of some fractional power of the integral operator is bounded.
\begin{assumption}[Eigenvalue Decay]
\label{assumption:eigen}
Assume that there exists $b \in [0,1]$ such that $\Tr[L_K^b] < \infty$.
\end{assumption}
Note that the sum of the eigenvalues of $L_K$ is at most $\sup_{\bx\in\fX}K(\bx,\bx)$, which we assumed to be bounded in Assumption~\ref{assumption:bounded}. This implies that the assumption above is always satisfied with $b=1$. Hence, a smaller $b$ corresponds to an RKHS with a smaller complexity.

\section{Kernel Truncated Randomized Ridge Regression}
\label{sec:main}

\begin{algorithm}[t]
\label{alg:RTKRR}
\caption{KTR$^3$: Kernel Truncated Randomized Ridge Regression}
\begin{algorithmic}
\State{\bfseries Input:} A training set $S=\{(\bx_i,y_i)\}_{i=1}^n$, a regularization parameter $\lambda\geq0$
\State{Randomly permute the training set $S$}
\For{$t = 0,1,\dots,n-1$}
\State{Set $f_t=\argmin_{f \in \HK} \ \lambda \|f\|^2 + \frac{1}{n}\sum_{i=1}^t (f(\bx_i)-y_i)^2$ }
\State{(take the minimum norm solution when there is no unique solution)}
\EndFor
\State{Return $T^Y \circ f_k$, where $k$ is uniformly at random between $0$ and $n-1$}
\end{algorithmic}
\end{algorithm}

We now describe our algorithm called Kernel Truncated Randomized Ridge Regression (KTR$^3$). The pseudo-code is in Algorithm~\ref{alg:RTKRR}. The algorithm consists of two stages. In the first stage, we generate $n$ candidate functions solving KRR with increasing sizes of the training set and a fixed regularization weight $\lambda$. In the second stage, we select the prediction function as the truncation of one of the candidate functions uniformly at random. Note that this is equivalent to extracting a subset of the training set of size $i$, where $i$ is uniformly at random between $0$ and $n-1$ and training a KRR on the subset with parameter $\lambda$.
The truncation function is defined as follows
\begin{equation*}
T^Y(z):=\min(Y,|z|) \cdot \sign(z)~.
\end{equation*}
The definition of the truncation function implies that $(T^Y(\hat{y})-y)^2\leq (\hat{y}-y)^2, \forall \hat{y} \in \R, y\in \fY$. 

We now present our two main theorems on the excess risk of KTR$^3$ where Theorem~\ref{thm:main} is on $\lambda>0$ and Theorem~\ref{thm:main-lambda-0} is on $\lambda=0$ for an ``easy'' problem regime. The proof of Theorem~\ref{thm:main-lambda-0} is in the Appendix.
\begin{theorem}
\label{thm:main}
Let $\fX \subset \R^d$ be a compact domain and $K$ a Mercer kernel such that Assumptions 1,2, and 3 are verified. Define by $f_{S,\lambda}$ the function returned by the KTR$^3$ algorithm on a training set $S$ with regularization parameter $\lambda>0$. Then
\begin{align*}
&\E\left[\Risk(f_{S,\lambda})\right] - \Risk(\frho) \\
&\ \leq \lambda^{2 \beta} \|L^{-\beta}_K \frho\|_\rho^2 
+ \min\left[\frac{4Y^2 \Tr[L_K^b] }{\lambda^b n}\min\left(\ln^{1-b}\left(1+\frac{1}{\lambda}\right),\frac{1}{b}\right),
\frac{\lambda^{2 \beta-1 }\|L^{-\beta}_K \frho\|_\rho^2}{n} + \frac{\Risk(\frho)}{\lambda n} \right],
\end{align*}
where the expectation is with respect to $S$ and the randomization of the algorithm.
\end{theorem}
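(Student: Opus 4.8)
The plan is to prove the bound by an online-to-batch conversion, treating the sequence of ridge solutions $f_0,\dots,f_{n-1}$ as the iterates of Follow-the-Regularized-Leader (FTRL) with regularizer $\psi(f)=\lambda n\|f\|^2$ and square losses $\ell_t(f)=(f(\bx_t)-y_t)^2$. First I would use the random index $k$ together with the i.i.d.\ assumption: since $f_k$ depends only on the first $k$ samples it is independent of $(\bx_{k+1},y_{k+1})$, and the truncation satisfies $(T^Y(f_k(\bx_{k+1}))-y_{k+1})^2\le (f_k(\bx_{k+1})-y_{k+1})^2$ because $y_{k+1}\in\fY$. Averaging over $k$ yields
\begin{equation*}
n\,\E[\Risk(f_{S,\lambda})]\le\E\sum_{t=1}^n (f_{t-1}(\bx_t)-y_t)^2,
\end{equation*}
so the expected risk of the returned function is controlled by the expected cumulative square loss of the online predictor $\{f_{t-1}\}$.

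Second, I would apply the Be-the-Leader / FTRL regret inequality against the comparator $u=\flambda:=\argmin_{f\in\HK}\Risk(f)+\lambda\|f\|^2$, obtaining (with $\delta_t:=\ell_t(f_{t-1})-\ell_t(f_t)$)
\begin{equation*}
\sum_{t=1}^n \ell_t(f_{t-1})-\sum_{t=1}^n \ell_t(\flambda)\le\lambda n\|\flambda\|^2+\sum_{t=1}^n\delta_t.
\end{equation*}
Taking expectations (so $\E\sum_t\ell_t(\flambda)=n\Risk(\flambda)$) and dividing by $n$ gives
\begin{equation*}
\E[\Risk(f_{S,\lambda})]-\Risk(\frho)\le\big[\Risk(\flambda)-\Risk(\frho)+\lambda\|\flambda\|^2\big]+\tfrac1n\E\textstyle\sum_t\delta_t.
\end{equation*}
The bracketed approximation term I would bound by expanding $\frho$ and $\flambda$ in the eigenbasis of $L_K$, using $\Risk(f)-\Risk(\frho)=\|f-\frho\|_\rho^2$ and $\|f\|^2=\|L_K^{-1/2}f\|_\rho^2$, minimizing coordinatewise, and applying Young's inequality $\lambda^{1-2\beta}\lambda_i^{2\beta}\le(1-2\beta)\lambda+2\beta\lambda_i$; under Assumption~\ref{assumption:L_K} this gives exactly $\lambda^{2\beta}\|L_K^{-\beta}\frho\|_\rho^2$.

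The crux is the stability sum $\tfrac1n\E\sum_t\delta_t$, which I would bound in two complementary ways and then take the minimum. For quadratic losses one computes $\delta_t=r_t^2\,\tfrac{s_t(2+s_t)}{(1+s_t)^2}$, where $r_t=f_{t-1}(\bx_t)-y_t$ and $s_t=\langle K(\bx_t,\cdot),A_{t-1}^{-1}K(\bx_t,\cdot)\rangle$ with $A_{t-1}=\lambda nI+\sum_{i<t}K(\bx_i,\cdot)\otimes K(\bx_i,\cdot)$. For the first term I would exploit that, once the relevant residuals are confined to $[-2Y,2Y]$ by the truncation, the clipped square loss is exp-concave with parameter $\Theta(1/Y^2)$; this replaces the unbounded $r_t^2$ by the constant $4Y^2$ and lets the leverage terms telescope via the matrix-determinant lemma, $\sum_t\tfrac{s_t}{1+s_t}\le\log\det\!\big(I+\tfrac1{\lambda n}\sum_t K(\bx_t,\cdot)\otimes K(\bx_t,\cdot)\big)$. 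To stay with population quantities I would take expectations and use concavity of $\log\det$ (Jensen) to replace the empirical covariance by $L_K$, giving $\E\sum_t\delta_t\le 4Y^2\sum_i\ln(1+\lambda_i/\lambda)$, and then bound this effective dimension with $\ln(1+x)\le\min\!\big(x^b/b,\ x^b\lambda^{-b}\ln^{1-b}(1+1/\lambda)\big)$ and Assumption~\ref{assumption:eigen}. For the second term I would instead use $s_t\le 1/(\lambda n)$ to get $\delta_t\le 2\ell_t(f_{t-1})/(\lambda n)$, so that $\tfrac1n\E\sum_t\delta_t\le \tfrac{C}{\lambda n}\,\E_{S,k}[\Risk(f_k)]$; a self-bounding argument that feeds the bound on the average online risk back into itself, together with $\Risk(\flambda)\le\Risk(\frho)+\lambda^{2\beta}\|L_K^{-\beta}\frho\|_\rho^2$, then yields $\tfrac{\lambda^{2\beta-1}\|L_K^{-\beta}\frho\|_\rho^2}{n}+\tfrac{\Risk(\frho)}{\lambda n}$.

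The main obstacle I anticipate is making the truncation interact correctly with the online regret: the exact stability term carries the genuine residual $r_t$, which is only bounded by $O(Y/\lambda)$, whereas the first target term needs the clipped value $4Y^2$. Securing this requires arguing that it is the regret of the \emph{truncated} predictions (not of the raw ridge iterates) that governs the risk, and controlling it through the exp-concavity of the clipped square loss; simultaneously, the passage from the empirical $\log\det$ to the population effective dimension $\sum_i\ln(1+\lambda_i/\lambda)$ without reintroducing data-dependent quantities is the delicate step. By comparison, the approximation-term computation and the bootstrapping needed for the second term are routine.
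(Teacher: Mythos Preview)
Your overall scaffold---online-to-batch conversion for the random index, a regret-style control of $\{f_{t-1}\}$, and two complementary bounds on the ``stability'' piece, with Jensen on $\log\det$ to pass to the population eigenvalues---is exactly the paper's scheme. The approximation term and Lemma~\ref{lemma:logdet} are handled as you describe.

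The gap is precisely the obstacle you flag, and your proposed resolution does not close it. In your BTL decomposition the stability term is $\delta_t=\ell_t(f_{t-1})-\ell_t(f_t)=r_t^2\,\tfrac{s_t(2+s_t)}{(1+s_t)^2}$ with the \emph{unclipped} residual $r_t$; the iterates $f_t$ are FTRL for the unclipped square loss, so an exp-concavity argument for the clipped loss does not apply to \emph{these} iterates, and there is no legitimate step that replaces $r_t^2$ by $4Y^2$ inside $\delta_t$. The first branch of the minimum therefore cannot be reached along your route. The culprit is that you discard the truncation in the very first inequality, $n\,\E[\Risk(f_{S,\lambda})]\le\E\sum_t(f_{t-1}(\bx_t)-y_t)^2$; once you are working with $\ell_t$ everywhere, the $4Y^2$ is gone.

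The paper avoids this by never passing through a $\delta_t$-based regret bound. It keeps the truncated loss $\ell'_t=(T^Y(f_{t-1}(\bx_t))-y_t)^2$ and uses the purely algebraic split
\[
\ell'_t \;=\; \frac{\ell'_t\,d'_t}{1+d'_t}\;+\;\frac{\ell'_t}{1+d'_t}, \qquad d'_t=\frac{d_t}{\lambda n}=s_t,
\]
together with the \emph{exact} identity of Theorem~\ref{thm:identity},
\[
\frac{1}{n}\sum_{t=1}^n \frac{\ell_t}{1+d'_t}\;=\;\min_{f\in\HK}\ \lambda\|f\|^2+\frac{1}{n}\sum_{t=1}^n(f(\bx_t)-y_t)^2~.
\]
In the first piece the truncated loss sits in front of the leverage factor, so $\ell'_t\le 4Y^2$ is immediate and $\sum_t \tfrac{\ell'_t d'_t}{1+d'_t}\le 4Y^2\sum_t\tfrac{d'_t}{1+d'_t}\le 4Y^2\ln\tfrac{|\lambda I+\frac{1}{n}K_n|}{|\lambda I|}$, after which Lemma~\ref{lemma:logdet} applies. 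In the second piece one simply drops the truncation, $\tfrac{\ell'_t}{1+d'_t}\le\tfrac{\ell_t}{1+d'_t}$, and the identity plus Theorem~\ref{thm:approx_err} give $\lambda^{2\beta}\|L_K^{-\beta}\frho\|_\rho^2+\Risk(\frho)$. That is how the truncation is made to ``interact correctly'' with the online analysis: it lives in the variance-like term, not in the comparator term.

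The same split also yields the second branch of the minimum without any self-bounding: since $d'_t\le \tfrac{1}{\lambda n}$, one has $\tfrac{1}{n}\sum_t\tfrac{\ell'_t d'_t}{1+d'_t}\le \tfrac{1}{\lambda n}\cdot\tfrac{1}{n}\sum_t\tfrac{\ell_t}{1+d'_t}$, and the identity again bounds the right-hand side by $\tfrac{1}{\lambda n}\big(\lambda^{2\beta}\|L_K^{-\beta}\frho\|_\rho^2+\Risk(\frho)\big)$. Your bootstrapping step is unnecessary (and would cost an extra constant and a restriction like $\lambda n>2$).
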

\begin{theorem}\label{thm:main-lambda-0}
  Let $\lambda = 0$ and assume the same conditions as in Theorem~\ref{thm:main} except for $\lambda$.
  Assume $\beta=1/2$ and $\Risk(f_\rho)=0$.
  Assume that the distribution $\rho$ satisfies that $K_n$ is invertible with probability~1.
  Then, $\E\left[\Risk(f_{S,0})\right] - \Risk(\frho) = O( n^{-1} )$.
\end{theorem}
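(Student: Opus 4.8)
The plan is to combine a standard online-to-batch conversion with the special geometry of minimum-norm interpolation in the noiseless regime. First I would rewrite the expected risk of the randomized output as the average risk of the interim predictors: since $k$ is drawn uniformly from $\{0,\dots,n-1\}$,
\[
\E\left[\Risk(f_{S,0})\right] = \frac{1}{n}\sum_{t=0}^{n-1}\E_S\left[\Risk(T^Y\circ f_t)\right].
\]
Because $f_t$ depends only on the first $t$ samples while the initial shuffle makes $(\bx_{t+1},y_{t+1})$ a fresh independent draw, conditioning on the first $t$ points gives $\E_S[(T^Y(f_t(\bx_{t+1}))-y_{t+1})^2]=\E_S[\Risk(T^Y\circ f_t)]$. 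Dropping the truncation via $(T^Y(\hat y)-y)^2\le(\hat y-y)^2$ then yields $\E[\Risk(f_{S,0})]\le \tfrac1n\,\E_S\big[\sum_{t=0}^{n-1}(f_t(\bx_{t+1})-y_{t+1})^2\big]$, reducing the claim to showing that the cumulative online square loss is $O(1)$.

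Next I would exploit the two ``easy-problem'' assumptions. From $\Risk(\frho)=0$ the labels are noiseless, i.e. $y=\frho(\bx)$ $\rho$-almost surely, and from $\beta=1/2$ we have $\frho\in\HK$ with $\|\frho\|^2=\|L_K^{-1/2}\frho\|_\rho^2$. With $\lambda=0$ and $K_t$ invertible, $f_t$ is the minimum-norm interpolant of the first $t$ points; since $\frho$ itself interpolates them, $f_t$ is exactly the orthogonal projection $P_t\frho$ of $\frho$ onto $\mathrm{span}\{K(\bx_1,\cdot),\dots,K(\bx_t,\cdot)\}\subseteq\HK$.

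The heart of the proof is then a telescoping identity. Writing $g_{t+1}:=(I-P_t)K(\bx_{t+1},\cdot)$ and using the reproducing property, the residual is $f_t(\bx_{t+1})-y_{t+1}=-\langle \frho, g_{t+1}\rangle$, and enlarging the projection to include the new point decreases $\|(I-P_t)\frho\|^2$ by exactly $\langle\frho,g_{t+1}\rangle^2/\|g_{t+1}\|^2$. Combined with $\|g_{t+1}\|^2\le K(\bx_{t+1},\bx_{t+1})\le 1$ from Assumption~\ref{assumption:bounded}, this gives $(f_t(\bx_{t+1})-y_{t+1})^2\le \|(I-P_t)\frho\|^2-\|(I-P_{t+1})\frho\|^2$. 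Summing telescopes to $\sum_{t=0}^{n-1}(f_t(\bx_{t+1})-y_{t+1})^2\le\|\frho\|^2=\|L_K^{-1/2}\frho\|_\rho^2$, and hence $\E[\Risk(f_{S,0})]-\Risk(\frho)\le \|L_K^{-1/2}\frho\|_\rho^2/n=O(n^{-1})$, which is reassuringly the $\lambda\to0$ limit of the second branch of Theorem~\ref{thm:main}.

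The main obstacle I anticipate is making the projection/telescoping geometry fully rigorous in a possibly infinite-dimensional RKHS: justifying that the minimum-norm interpolant equals $P_t\frho$ (using the invertibility assumption to guarantee exact interpolation and linear independence of the representers), verifying that $P_{t+1}$ differs from $P_t$ by the single rank-one direction $g_{t+1}$, and handling the degenerate case $g_{t+1}=0$ (where the residual vanishes and the projection is unchanged). Everything else is a careful but routine combination of the online-to-batch step with this telescoping bound.
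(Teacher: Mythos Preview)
Your proposal is correct and yields the same final bound $\E[\Risk(f_{S,0})]\le \|\frho\|^2/n$, but the route is genuinely different from the paper's. The paper reuses the Zhdanov--Kalnishkan identity (Theorem~\ref{thm:identity}) in the form $\tfrac1n\sum_t \ell_t/(1+d_t/(n\lambda))=\lambda\,\by^\top(K_n+n\lambda I)^{-1}\by$, takes a careful limit $\lambda\to 0$ to obtain $\sum_t \ell_t/d_t=\by^\top K_n^{-1}\by$, and only then invokes a projection argument, namely that $Q=R_n^*(R_nR_n^*)^{-1}R_n$ is idempotent so $\by^\top K_n^{-1}\by=\langle Q\frho,\frho\rangle\le\|\frho\|^2$; the bound $d_t\le 1$ then gives $\sum_t\ell_t\le\sum_t\ell_t/d_t$. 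You instead identify $f_t=P_t\frho$ directly and telescope $\ell_{t+1}=\langle\frho,g_{t+1}\rangle^2\le \langle\frho,g_{t+1}\rangle^2/\|g_{t+1}\|^2=\|(I-P_t)\frho\|^2-\|(I-P_{t+1})\frho\|^2$, using $\|g_{t+1}\|^2\le 1$. The two arguments are doing the same thing underneath: $\|g_{t+1}\|^2$ is exactly $d_{t+1}$ at $\lambda=0$, your telescoping sum $\sum_t\ell_t/d_t=\|P_n\frho\|^2$ is precisely $\by^\top K_n^{-1}\by$, and $P_n=Q$. What your approach buys is self-containment: you avoid Theorem~\ref{thm:identity} and the somewhat delicate $\lambda\to 0$ limit entirely. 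What the paper's approach buys is uniformity with the $\lambda>0$ proof and an explicit link to the closed-form quantity $\by^\top K_n^{-1}\by$. Your anticipated obstacle is not an issue: invertibility of $K_n$ forces all principal submatrices $K_t$ to be invertible, so the representers are linearly independent and $g_{t+1}\neq 0$ almost surely, while the degenerate case is harmless exactly as you say.
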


\paragraph{Remark.}
Our algorithm can be changed to randomize at the prediction time for each test data point rather than the training time while enjoying the same risk bound.
Furthermore, our algorithm can sample from $\lfloor (1-\alpha)n\rfloor$ to $n-1$ for some $\alpha \in (0,1]$ instead of from $\{0,\ldots,n-1\}$ and obtain a rate $\frac{1}{\alpha}$ factor worse than the bounds above; our choice of presentation of Algorithm~\ref{alg:RTKRR} is for simplicity.

From the above theorem, with appropriate settings of the regularization parameter $\lambda$ it is possible to obtain the following convergence rates.
\begin{cor}\label{cor:main}
Under the assumptions of Theorem~\ref{thm:main}, there exists a setting of $\lambda\ge0$ such that:
\begin{itemize}
  \item[(i)] When $b\neq0$,
  \[
  \E\left[\Risk(f_{S,\lambda})\right] - \Risk(\frho) 
  \leq O\left( \min\left(\left(n/\Risk(\frho) \right)^{-\frac{2\beta}{2\beta+1}} + n^{-2\beta}
  ,n^{-\frac{2\beta}{2\beta+b}}\right)\right)~.
  \]
  \item[(ii)] In the case $b=0$ and $\beta=\frac{1}{2}$,\footnote{When $b=0$ the space is finite dimensional, hence $\beta$ can only have value $0$ or $1/2$ and there is no convergence to the Bayes risk when $\beta=0$.}
  \[
  \E\left[\Risk(f_{S,\lambda})\right] - \Risk(\frho) 
  \leq O\lt( n^{-1} \Tr[L_K^0] \log \lt( 1 + n/\Tr[L_K^0]\rt) \rt)~.
  \]
\end{itemize}
\end{cor}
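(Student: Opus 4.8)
The plan is to start from the bound in Theorem~\ref{thm:main}, which for every $\lambda>0$ has the bias--variance form $\lambda^{2\beta}\|L^{-\beta}_K\frho\|_\rho^2 + \min[A(\lambda),B(\lambda)]$, where $A(\lambda)$ is the ``standard'' variance term (the one carrying $\Tr[L_K^b]$) and $B(\lambda)=\frac{\lambda^{2\beta-1}\|L^{-\beta}_K\frho\|_\rho^2}{n}+\frac{\Risk(\frho)}{\lambda n}$ is the ``low-noise'' variance term. The key structural observation is that, since the bound already contains the minimum of the two variance terms, for any fixed $\lambda$ it is simultaneously at most $\lambda^{2\beta}\|L^{-\beta}_K\frho\|_\rho^2+A(\lambda)$ and at most $\lambda^{2\beta}\|L^{-\beta}_K\frho\|_\rho^2+B(\lambda)$. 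Hence I would optimize each of these two expressions over $\lambda$ separately, obtaining two candidate rates, and then---because the corollary only asserts the \emph{existence} of a good $\lambda$---select whichever tuning produces the smaller guarantee. This is precisely what realizes the outer $\min(\cdot,\cdot)$ in part~(i), without ever requiring a single $\lambda$ to be optimal for both branches at once.

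For the $A$-branch (which yields the standard rate $n^{-2\beta/(2\beta+b)}$ and also part~(ii)) I would upper bound the inner $\min(\ln^{1-b}(1+1/\lambda),1/b)$ by $1/b$ when $b\neq0$, leaving $\lambda^{2\beta}\|L^{-\beta}_K\frho\|_\rho^2+\frac{4Y^2\Tr[L_K^b]}{b\lambda^b n}$, and then balance the two powers of $\lambda$ by setting $\lambda\asymp n^{-1/(2\beta+b)}$; both terms then equal $n^{-2\beta/(2\beta+b)}$ up to constants, giving part~(i)'s second argument. For part~(ii) I set $b=0$, so the inner minimum collapses to the logarithmic factor $\ln(1+1/\lambda)$ and $\Tr[L_K^0]$ is finite; with $\beta=\tfrac12$ the bias term is linear in $\lambda$, so balancing $\lambda\,\|L^{-1/2}_K\frho\|_\rho^2$ against $\frac{4Y^2\Tr[L_K^0]}{n}\ln(1+1/\lambda)$ by taking $\lambda\asymp\Tr[L_K^0]/n$ produces the claimed $\frac{\Tr[L_K^0]}{n}\log(1+n/\Tr[L_K^0])$ rate, after absorbing the problem-dependent constants into the logarithm's argument and the hidden factor.

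For the $B$-branch (which yields the low-noise rate) I would balance the bias term $\lambda^{2\beta}\|L^{-\beta}_K\frho\|_\rho^2$ against the noise term $\frac{\Risk(\frho)}{\lambda n}$, suggesting $\lambda\asymp(\Risk(\frho)/n)^{1/(2\beta+1)}$, which makes both equal $(n/\Risk(\frho))^{-2\beta/(2\beta+1)}$. The one subtlety---and what I expect to be the main obstacle---is the leftover middle term $\frac{\lambda^{2\beta-1}\|L^{-\beta}_K\frho\|_\rho^2}{n}$: since $2\beta-1\le0$, this term \emph{grows} as $\lambda$ shrinks, so when $\Risk(\frho)$ is very small the naive balancing point drives it up. I would handle this by clipping the choice from below, taking $\lambda=\max\!\big((\Risk(\frho)/n)^{1/(2\beta+1)},\,1/n\big)$, and splitting into two cases. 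When $\Risk(\frho)\ge n^{-2\beta}$ the clip is inactive, $\lambda\ge 1/n$ forces $\lambda^{2\beta-1}/n=\lambda^{2\beta}/(\lambda n)\le\lambda^{2\beta}$, and all three terms collapse to $(n/\Risk(\frho))^{-2\beta/(2\beta+1)}$; when $\Risk(\frho)<n^{-2\beta}$ the clip activates at $\lambda=1/n$, and direct substitution shows the bias, middle, and noise terms are each $O(n^{-2\beta})$. Adding the two cases gives $O\big((n/\Risk(\frho))^{-2\beta/(2\beta+1)}+n^{-2\beta}\big)$, the first argument of part~(i)'s min, and combining the $A$- and $B$-branch guarantees through the existence/min argument of the first paragraph completes part~(i).
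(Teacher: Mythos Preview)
Your proposal is correct and follows essentially the same strategy as the paper: optimize each branch of the Theorem~\ref{thm:main} bound separately over $\lambda$, and then realize the outer minimum by selecting whichever tuning gives the smaller guarantee. Two minor differences: your clipped choice $\lambda=\max\!\big((\Risk(\frho)/n)^{1/(2\beta+1)},\,1/n\big)$ handles the troublesome middle term $\lambda^{2\beta-1}/n$ uniformly in $\Risk(\frho)$, whereas the paper argues it is asymptotically dominated by the noise term and treats $\Risk(\frho)=0$ as a separate case with its own $\lambda$; and for part~(ii) the paper re-enters the proof of Theorem~\ref{thm:main} to invoke the sharper second inequality of Lemma~\ref{lemma:logdet} (with $\Tr[L_K^0]$ inside the logarithm), while your use of $\ln(1+1/\lambda)$ with $\lambda\asymp\Tr[L_K^0]/n$ reaches the same order directly.
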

The proof and the tuning of $\lambda$ can be found in the Appendix.
Before moving to the proof of Theorem~\ref{thm:main} in the next section, there are some interesting points to stress.
\begin{itemize}
\item In the case of $\Risk(\frho)\neq0$, our rate $n^{-\frac{2\beta}{2\beta+b}}$ matches the worst-case lower bound~\citep{CaponnettoV07} without additional assumptions for the first time in the literature, to our knowledge. 
Specifically, our bound is a strict improvement in the regime $2\beta+b<1$ upon the best-known bound $O(n^{-2\beta})$ of KRR~\citep{LinRRC18} and stochastic gradient descent~\citep{DieuleveutB16}.
\item If $\Risk(\frho)=0$, we have convergence of the risk to 0 at a faster rate of $n^{-\frac{2\beta}{\min(2\beta+b,1)}}$. 
It is important to stress that this holds also in the case that $\frho \notin \HK$, i.e., $\beta < \tfrac{1}{2}$. As far as we know, this result is new and we are not aware of lower bounds under the same assumptions.
\item When $\Risk(\frho)=0$, the optimal $\lambda$ that minimizes the generalization upper bound in Theorem~\ref{thm:main} goes to zero when $\beta$ goes to $1/2$ and becomes exactly 0 when $\beta$ is exactly $1/2$. 
\end{itemize}

\subsection{Proof of Theorem~\ref{thm:main}}

Our proof technique is vastly different from the existing ones for analyzing KRR and stochastic gradient descent methods. It is also extremely short and simple compared to the proofs of similar results.
Our technique is based on the well-known possibility to solve batch problems through a reduction to online learning ones. In turn, we use a recent result on the performance of online kernel ridge regression, Theorem~\ref{thm:identity} by~\citet{ZhdanovK13}. This result is the key to obtain the improved rates in the regime $2 \beta + b<1$. In particular, it allows us to analyze the effect of the eigenvalues using only the expectation of the Gram matrix $K_n$ and nothing else. Instead, previous proofs \citep[e.g.,][]{LinC18} involved the study of the convergence of empirical covariance operator to the population one, which seems to deteriorate when the regularization parameter becomes too small, which is precisely needed in the regime $2\beta+b<1$.

\begin{theorem}{\citep[Theorem 1]{ZhdanovK13}}
\label{thm:identity}
Take a kernel $K$ on a domain $\fX$ and a parameter $\lambda > 0$.
Then, with the notation of Algorithm~\ref{alg:RTKRR}, we have
\[
\frac{1}{n}\sum_{t=1}^n \frac{(f_{t-1}(\bx_t)-y_t)^2}{1+\frac{d_t}{\lambda n}} 
= \min_{f \in \HK} \ \lambda \|f\|^2 + \frac{1}{n}\sum_{t=1}^n \left(f(\bx_t)-y_t\right)^2,
\]
where $d_t := K(\bx_t, \bx_t) - \bk_{t-1}(\bx_t)^\top(K_{t-1}+\lambda n I)^{-1} \bk_{t-1}(\bx_t)\geq0$, $\bk_{t-1}(\bx_t):=[K(\bx_t, \bx_1), \dots, K(\bx_t, \bx_{t-1})]^\top$, and $K_{t-1}$ is the Gram matrix of the samples $\bx_1, \dots, \bx_{t-1}$.
\end{theorem}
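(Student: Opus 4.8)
The plan is to prove the stated loss identity by exhibiting it as a telescoping sum of the optimal regularized objective values, where each increment is computed by a single bordered (rank-one) update of the Gram matrix. Write $\gamma \defeq \lambda n$, let $\by_t \defeq (y_1,\dots,y_t)^\top$, and recall that the minimizer $f_t$ has the closed form $f_t(\bx) = \bk_t(\bx)^\top (K_t + \gamma I)^{-1}\by_t$, with optimal value
\[
V_t \defeq \min_{f \in \HK}\left(\lambda n \|f\|^2 + \sum_{i=1}^t (f(\bx_i)-y_i)^2\right) = \gamma\,\by_t^\top (K_t+\gamma I)^{-1}\by_t .
\]
The last equality is the standard ridge value formula, obtained by substituting the representer-theorem coefficients $(K_t+\gamma I)^{-1}\by_t$ and simplifying. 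Since multiplying the objective by $n$ does not change the argmin, the right-hand side of the theorem equals $V_n/n$, and $V_0 = 0$ (the empty objective is minimized by $f=0$). Hence it suffices to prove the per-round identity $V_t - V_{t-1} = (f_{t-1}(\bx_t)-y_t)^2/(1 + d_t/(\lambda n))$ for each $t$, and then telescope $\sum_{t=1}^n (V_t - V_{t-1}) = V_n$ and divide by $n$.

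For the per-round identity I would expand $V_t$ through a $2\times 2$ block decomposition of $K_t + \gamma I$, with leading block $A \defeq K_{t-1}+\gamma I$, border $\bk_{t-1}(\bx_t)$, and corner $K(\bx_t,\bx_t)+\gamma$. The Schur complement of $A$ is exactly
\[
s \defeq K(\bx_t,\bx_t) + \gamma - \bk_{t-1}(\bx_t)^\top A^{-1}\bk_{t-1}(\bx_t) = d_t + \gamma = \gamma\left(1 + \tfrac{d_t}{\lambda n}\right),
\]
which is precisely the denominator in the claim. Applying the block-inverse formula to $\by_t^\top (K_t+\gamma I)^{-1}\by_t$ and using $\bk_{t-1}(\bx_t)^\top A^{-1}\by_{t-1} = f_{t-1}(\bx_t)$ together with $\by_{t-1}^\top A^{-1}\by_{t-1} = V_{t-1}/\gamma$, the three border-dependent terms collapse into a single perfect square, giving $\by_t^\top (K_t+\gamma I)^{-1}\by_t = V_{t-1}/\gamma + (f_{t-1}(\bx_t)-y_t)^2/s$. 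Multiplying through by $\gamma$ and substituting $s = \gamma(1 + d_t/(\lambda n))$ yields the per-round identity verbatim.

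Finally, for $d_t \ge 0$ I would observe that the bordered matrix $M \defeq K_t + \gamma\,\diag{I_{t-1},0}$ is positive semidefinite (it is $K_t \succeq 0$ plus a PSD term) and has invertible leading block $A \succ 0$; its Schur complement is exactly $d_t$, and the Schur complement of a PSD matrix with invertible leading block is itself PSD, so $d_t \ge 0$. The only genuinely delicate point in the whole argument is the block-inverse bookkeeping: one must check that the cross terms assemble into the symmetric square $(f_{t-1}(\bx_t)-y_t)^2$ rather than an asymmetric expression, and that the $\frac{1}{n}$ scaling in the objective is consistently converted into $\gamma = \lambda n$ throughout. Everything else is exact finite-dimensional linear algebra, with no approximation or probabilistic step, which is exactly why the identity holds with equality for every realization of the data.
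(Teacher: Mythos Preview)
Your argument is correct. The telescoping identity $V_t - V_{t-1} = (f_{t-1}(\bx_t)-y_t)^2/(1+d_t/(\lambda n))$ via the Schur complement of $K_{t-1}+\lambda n I$ in $K_t+\lambda n I$, combined with the ridge value formula $V_t = \lambda n\,\by_t^\top(K_t+\lambda n I)^{-1}\by_t$, is exactly the right decomposition, and your block-inverse bookkeeping checks out line by line. The $d_t\ge 0$ argument via the Schur complement of a PSD bordered matrix is also clean.

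Note, however, that the paper does not give its own proof of this statement: it is quoted as \citep[Theorem~1]{ZhdanovK13} and used as a black box. So there is no ``paper's proof'' to compare against here; what you have written is essentially the standard derivation that appears in the cited reference (and more broadly in the online ridge regression literature), namely a rank-one/bordering update of the regularized Gram matrix that turns the cumulative optimum into a telescoping sum of per-round residuals weighted by the inverse Schur complement. Your presentation is self-contained and would serve perfectly well as an in-paper proof if one were desired.
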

We use the following well-known result to upper bound the approximation error, which is the gap between the value of the regularized population risk minimization problem and the Bayes risk.
\begin{theorem}{\citep[Proposition 8.5.ii]{CuckerZ07}}
\label{thm:approx_err}
Let $\fX \subset \R^d$ be a compact domain and $K$ a Mercer kernel such that Assumption 2 holds.
Then, for any $0 <\lambda\leq 1/2$, we have
\[
\min_{f \in \HK} \ \lambda \|f\|^2+ \Risk(f) - \Risk(\frho) 
\leq \lambda^{2 \beta} \|L^{-\beta}_K \frho\|_\rho^2~.
\]
\end{theorem}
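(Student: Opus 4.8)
The plan is to reduce the claim to a single scalar inequality applied coordinatewise in the eigenbasis of $L_K$. First I would remove the $y$-dependence through the standard bias–variance identity for the square loss. Since $\frho(\bx)=\int_\fY y\,d\rho(y\mid\bx)$, conditioning on $\bx$ annihilates the cross term and gives $\Risk(f)-\Risk(\frho)=\|f-\frho\|_\rho^2$ for every $f\in\Ltworho$. Because Assumption~\ref{assumption:bounded} makes $K$ bounded we have $\HK\subseteq\Ltworho$, so the left-hand side becomes the purely deterministic regularized approximation problem
\[
A(\lambda):=\min_{f\in\HK}\ \lambda\|f\|^2+\|f-\frho\|_\rho^2~.
\]

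Next I would diagonalize. Expanding in the orthonormal basis $\{\Phi_i\}$ of $\Ltworho$, write $\frho=\sum_i c_i\Phi_i$ and $f=\sum_i b_i\Phi_i$. The identity $\HK=L_K^{1/2}(\Ltworho)$ with $\|f\|=\|L_K^{-1/2}f\|_\rho$ gives $\|f\|^2=\sum_i b_i^2/\lambda_i$, while the source condition (Assumption~\ref{assumption:L_K}), $\frho=L_K^\beta g$, yields $\|L_K^{-\beta}\frho\|_\rho^2=\sum_i c_i^2\lambda_i^{-2\beta}$ exactly as in~\eqref{eq:range_space}. The objective then splits across coordinates,
\[
\lambda\|f\|^2+\|f-\frho\|_\rho^2=\sum_i\Big(\tfrac{\lambda}{\lambda_i}b_i^2+(b_i-c_i)^2\Big),
\]
so each $b_i$ is optimized independently. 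Minimizing the $i$-th scalar quadratic gives $b_i^\star=\tfrac{\lambda_i}{\lambda_i+\lambda}c_i$, and substituting back the per-coordinate minimum equals $\tfrac{\lambda}{\lambda_i+\lambda}c_i^2$, whence $A(\lambda)=\sum_i\tfrac{\lambda}{\lambda_i+\lambda}c_i^2$. A one-line check that $\sum_i (b_i^\star)^2/\lambda_i\le\|\frho\|_\rho^2/(4\lambda)<\infty$ confirms the minimizer genuinely lies in $\HK$, so the minimum is attained.

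It then remains to compare $A(\lambda)$ with the target $\lambda^{2\beta}\sum_i c_i^2\lambda_i^{-2\beta}$, and for this a single termwise bound suffices: $\tfrac{\lambda}{\lambda_i+\lambda}\le\lambda^{2\beta}\lambda_i^{-2\beta}$, equivalently $\lambda^{1-2\beta}\lambda_i^{2\beta}\le\lambda_i+\lambda$. Since $0<\beta\le\tfrac12$ we have $2\beta\in(0,1]$, so weighted AM–GM (Young's inequality) with weights $2\beta$ and $1-2\beta$ gives $\lambda^{1-2\beta}\lambda_i^{2\beta}\le(1-2\beta)\lambda+2\beta\lambda_i\le\lambda+\lambda_i$, which is precisely what is needed; summing over $i$ delivers the claim. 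The only genuinely substantive step is this convexity inequality, and it is where the restriction $\beta\le\tfrac12$ is used; everything else is bookkeeping in the eigenbasis. I would remark that the argument in fact holds for every $\lambda>0$, so the hypothesis $\lambda\le\tfrac12$ in the stated form is not essential to this spectral proof, though I would retain it to match the cited reference. The main thing requiring care — the closest thing to an obstacle — is justifying the coordinatewise optimization and the interchange of the minimization with the infinite sum, which is legitimate precisely because the objective is a sum of nonnegative terms each controllable in isolation, together with the operator-calculus identities $\|f\|=\|L_K^{-1/2}f\|_\rho$ and the well-definedness of $L_K^{-\beta}\frho$ guaranteed by the source condition.
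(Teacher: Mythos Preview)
The paper does not supply its own proof of this statement; it simply quotes the result from \citet[Proposition~8.5.ii]{CuckerZ07} and uses it as a black box inside the proof of Theorem~\ref{thm:main}. So there is nothing in the paper to compare your argument against beyond the bare statement.

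Your proof is correct and is essentially the standard spectral derivation of the approximation-error bound. The reduction $\Risk(f)-\Risk(\frho)=\|f-\frho\|_\rho^2$, the diagonalization of $\lambda\|f\|^2+\|f-\frho\|_\rho^2$ in the eigenbasis of $L_K$ via $\|f\|=\|L_K^{-1/2}f\|_\rho$, the coordinatewise minimization yielding $A(\lambda)=\sum_i \tfrac{\lambda}{\lambda+\lambda_i}c_i^2$, and the closing Young/AM--GM step $\lambda^{1-2\beta}\lambda_i^{2\beta}\le(1-2\beta)\lambda+2\beta\lambda_i\le\lambda+\lambda_i$ are all valid, and the source condition indeed forces $c_i=0$ whenever $\lambda_i=0$, so the termwise bound is applied only where it makes sense. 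Your observation that the hypothesis $\lambda\le\tfrac12$ is not used in this spectral argument is also accurate; the inequality holds for every $\lambda>0$. The only cosmetic point is that the ``interchange of minimization with the infinite sum'' can be phrased more cleanly by simply evaluating the objective at the explicit candidate $f^\star=\sum_i b_i^\star\Phi_i$ (which you verified lies in $\HK$), since that already gives an upper bound on $A(\lambda)$, and an upper bound is all the theorem asserts.
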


We also need the following technical lemmas.
The proof of the next lemma is in the Appendix.
\begin{lemma}
\label{lemma:logdet}
Under Assumptions~\ref{assumption:bounded} and \ref{assumption:eigen}, and with $\lambda>0$, we have
\[
\E_S\left[\ln \frac{|\lambda I_n + \frac{1}{n}K_n|}{|\lambda I_n|} \right]
\leq \min\left(\ln^{1-b}\left(1+\frac{1}{\lambda}\right),\frac{1}{b}\right)\frac{\Tr[L_K^b]}{\lambda^b}~.
\]
Furthermore, if $b=0$, then
\begin{align*}
  \E_S\left[\ln \frac{|\lambda I_n + \frac{1}{n}K_n|}{|\lambda I_n|} \right]
  \leq 
  \ln\left(1 + \frac{1}{\Tr[L^0_K]\lambda}\right) \Tr[L^0_K]~.
\end{align*}
\end{lemma}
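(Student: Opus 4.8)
The plan is to reduce the determinant ratio to a sum over the eigenvalues of the Gram matrix, control each term by two elementary pointwise bounds on $\ln(1+\cdot)$, and then transfer from the empirical Gram matrix to the integral operator through a concavity (Jensen) argument. First I would rewrite the ratio: since $\frac{|\lambda I_n + \frac1n K_n|}{|\lambda I_n|} = |I_n + \frac{1}{\lambda n}K_n|$, letting $\mu_1,\dots,\mu_n\ge0$ be the eigenvalues of the positive semidefinite matrix $K_n$, we have $\ln\frac{|\lambda I_n + \frac1n K_n|}{|\lambda I_n|} = \sum_{i=1}^n \ln(1 + \frac{\mu_i}{\lambda n})$. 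By Assumption~\ref{assumption:bounded}, $\Tr[K_n]=\sum_t K(\bx_t,\bx_t)\le n$, so each $\mu_i\le n$ and hence $\frac{\mu_i}{\lambda n}\le\frac1\lambda$.

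Next come the two pointwise inequalities, valid for $b\in(0,1]$ and $u\ge0$. For the $\frac1b$ branch I would use $\ln(1+u)\le u^b/b$, which follows by checking that the derivative of $u^b/b-\ln(1+u)$ is nonnegative, i.e. $u^{b-1}+u^b\ge1$ for all $u>0$, while both sides vanish at $u=0$. For the logarithmic branch I would split $\ln(1+u)=[\ln(1+u)]^{1-b}[\ln(1+u)]^{b}\le \ln^{1-b}(1+\tfrac1\lambda)\,u^b$, bounding the first factor via $u\le\frac1\lambda$ and the second via $\ln(1+u)\le u$. Applying these with $u=\mu_i/(\lambda n)$ and summing gives $\sum_i \ln(1+\frac{\mu_i}{\lambda n})\le C_b\,\lambda^{-b}\Tr[(\tfrac1n K_n)^b]$ with $C_b=\min(\tfrac1b,\ln^{1-b}(1+\tfrac1\lambda))$, using $\sum_i(\mu_i/n)^b=\Tr[(\tfrac1n K_n)^b]$.

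The main step, which I expect to be the real obstacle, is to show $\E_S[\Tr[(\tfrac1n K_n)^b]]\le \Tr[L_K^b]$. Here I would introduce the empirical covariance operator $C_n:=\frac1n\sum_{t=1}^n K(\bx_t,\cdot)\otimes K(\bx_t,\cdot)$ on $\HK$ and invoke the standard fact that $\frac1n K_n$ and $C_n$ share the same nonzero eigenvalues (they form the Gram/covariance pair $\frac1n\Phi\Phi^*$ and $\frac1n\Phi^*\Phi$); since $0^b=0$ this gives $\Tr[(\tfrac1n K_n)^b]=\Tr[C_n^b]$. Because $t\mapsto t^b$ is operator concave for $b\in[0,1]$, the functional $A\mapsto\Tr[A^b]$ is concave on positive operators, so Jensen's inequality yields $\E_S[\Tr[C_n^b]]\le \Tr[(\E_S C_n)^b]$. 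Finally $\E_S C_n$ is the population covariance operator, which shares nonzero eigenvalues with $L_K$, so $\Tr[(\E_S C_n)^b]=\Tr[L_K^b]$; combining with the previous paragraph proves the first bound. The delicate points are justifying the operator-concavity/Jensen step in the possibly infinite-dimensional operator setting and the spectral identification $\E_S C_n\leftrightarrow L_K$.

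For the case $b=0$, the rank of $L_K$ is finite, say $D=\Tr[L_K^0]$, and $\frac1n K_n$ has at most $D$ nonzero eigenvalues. I would apply Jensen to the concave map $\ln$ twice: once over the (at most $D$) nonzero eigenvalues to obtain $\sum_i\ln(1+\frac{\mu_i}{\lambda n})\le D\ln(1+\frac{\Tr[\frac1n K_n]}{D\lambda})$, and once over the sample $S$, using that $x\mapsto D\ln(1+\frac{x}{D\lambda})$ is increasing together with $\E_S[\Tr[\tfrac1n K_n]]=\int_\fX K(\bx,\bx)\,d\rho_\fX=\Tr[L_K]\le1$. This yields $\E_S[\ln\frac{|\lambda I_n+\frac1n K_n|}{|\lambda I_n|}]\le D\ln(1+\frac{1}{D\lambda})$, which is exactly the stated bound.
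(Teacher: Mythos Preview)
Your proof is correct, but the order of the two main steps is reversed compared to the paper. The paper first applies Jensen directly at the level of the log-determinant: using that $\tfrac1n K_n$ and the empirical covariance operator $T_n$ on $\HK$ share the same nonzero spectrum, and that $A\mapsto\ln|I+\tfrac1\lambda A|$ is concave, one gets
\[
\E_S\Big[\ln\big|I_n+\tfrac1{\lambda n}K_n\big|\Big]\;\le\;\ln\big|I+\tfrac1\lambda T_K\big|\;=\;\sum_i\ln\!\big(1+\tfrac{\lambda_i}{\lambda}\big),
\]
and only then applies the pointwise inequality $\ln(1+x)\le \min\big(\tfrac1b,\ln^{1-b}(1+\tfrac1\lambda)\big)\,x^b$ term by term. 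You instead apply the pointwise inequality first, reducing to $\Tr[(\tfrac1n K_n)^b]$, and then invoke Jensen through the operator concavity of $A\mapsto A^b$ for $b\in[0,1]$ to obtain $\E_S\big[\Tr[C_n^b]\big]\le\Tr[(\E_S C_n)^b]=\Tr[L_K^b]$. Both routes use the same spectral identifications $\tfrac1n K_n\leftrightarrow C_n$ and $L_K\leftrightarrow T_K$; the paper's route is technically lighter because concavity of $\log\det$ is more elementary than operator concavity of fractional powers (L\"owner--Heinz), which you rightly flag as the delicate point. Your $b=0$ argument (rank at most $D=\Tr[L_K^0]$, AM--GM over the nonzero eigenvalues, then Jensen in $S$) is essentially the same as the paper's, just with the two concavity steps swapped.
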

Note that the logarithmic term is unavoidable when $b=0$ because in the finite dimensional case we pay $-\ln(\lambda)$ due to the online learning setting.
The last lemma is a classic result in online learning~\citep[e.g.][]{Cesa-BianchiCG05}.
\begin{lemma}
\label{lemma:sum_dt}
With the notation in Theorem~\ref{thm:identity}, we have that
\[
\sum_{t=1}^n \frac{d_t}{d_t+\lambda n} \leq \ln \frac{|\lambda I_n + \frac{1}{n}K_n|}{|\lambda I_n|}~.
\]
\end{lemma}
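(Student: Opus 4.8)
The plan is to turn the right-hand side into a telescoping sum of log-determinant increments, identify each increment with $d_t$ via a Schur-complement identity, and then compare term-by-term against the left-hand side using an elementary scalar inequality.

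First I would rewrite the determinant ratio in terms of the regularized Gram matrices. Writing $\tilde{K}_t := K_t + \lambda n I$ for each $t$, observe that $\lambda I_n + \frac{1}{n}K_n = \frac{1}{n}\tilde{K}_n$ and $|\lambda I_n| = \lambda^n$, so that
\[
\ln \frac{|\lambda I_n + \frac{1}{n}K_n|}{|\lambda I_n|} = \ln \frac{|\tilde{K}_n|}{(\lambda n)^n}~.
\]
The key algebraic step is then to relate $d_t$ to the ratio of consecutive determinants $|\tilde{K}_t|/|\tilde{K}_{t-1}|$. Writing $\tilde{K}_t$ in block form with leading principal block $\tilde{K}_{t-1}$, off-diagonal block $\bk_{t-1}(\bx_t)$, and bottom-right scalar $K(\bx_t,\bx_t)+\lambda n$, the Schur-complement determinant identity $|M|=|A|\cdot|D-CA^{-1}B|$ yields
\[
\frac{|\tilde{K}_t|}{|\tilde{K}_{t-1}|} = K(\bx_t,\bx_t) + \lambda n - \bk_{t-1}(\bx_t)^\top(K_{t-1}+\lambda n I)^{-1}\bk_{t-1}(\bx_t) = d_t + \lambda n~,
\]
which is precisely the definition of $d_t$ from Theorem~\ref{thm:identity} shifted by $\lambda n$.

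Telescoping from $\tilde{K}_0$ (the empty matrix, with $|\tilde{K}_0|=1$) gives $|\tilde{K}_n| = \prod_{t=1}^n (d_t + \lambda n)$, and hence
\[
\ln \frac{|\tilde{K}_n|}{(\lambda n)^n} = \sum_{t=1}^n \ln\left(1 + \frac{d_t}{\lambda n}\right)~.
\]
Finally I would invoke the elementary inequality $\frac{x}{1+x} \le \ln(1+x)$, valid for all $x \ge 0$, applied with $x = d_t/(\lambda n) \ge 0$ (using $d_t \ge 0$ from Theorem~\ref{thm:identity}). Since $\frac{d_t}{d_t+\lambda n} = \frac{x}{1+x}$, each summand on the left is bounded by the corresponding log term, and summing over $t$ completes the proof. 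The only genuine care required is the bookkeeping of the $1/n$ and $\lambda n$ factors in the reduction to $\tilde{K}_n$ and verifying the block-determinant identity; the concluding inequality is immediate, so I do not anticipate a real obstacle here.
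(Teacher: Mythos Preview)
Your proposal is correct and follows essentially the same route as the paper's proof: both use the scalar inequality $\frac{x}{1+x}\le\ln(1+x)$ termwise and then the identity $\prod_{t=1}^n(\lambda n+d_t)=|\lambda n I_n+K_n|$. The only difference is that the paper cites this product--determinant identity from \citet[Lemma~3]{ZhdanovK13}, whereas you derive it directly via the Schur-complement telescoping argument; your version is thus self-contained but otherwise identical in structure.
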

\begin{proof}
From the elementary inequality $\ln(1+x) \geq \frac{x}{x+1}$, we have that
$\tfrac{d_t}{d_t+\lambda n} \leq \ln \left(1+\tfrac{d_t}{\lambda n}\right)$.
Hence, $\sum_{t=1}^n \frac{d_t}{d_t+\lambda n} \leq \log \prod_{t=1}^n \left(1+ \frac{d_t}{\lambda n}\right)$. Also, using \citet[Lemma 3]{ZhdanovK13} we have $\prod_{t=1}^n (\lambda n + d_t) = |\lambda n I_n + K_n|$. Putting all together, we have the stated bound.
\end{proof}

We are now ready to prove Theorem~\ref{thm:main}.
\begin{proof}[Proof of Theorem~\ref{thm:main}]
Define $f_\lambda = \argmin_{f \in \HK} \ \lambda \|f\|^2 + \Risk(f)$, which is the solution of the regularization true risk minimization problem.

First, we use the so-called online-to-batch conversion~\citep{Cesa-BianchiCG04} to have
\begin{align*}
\E_{S,k}[\Risk(T^Y \circ f_k)] 
&= \E_S\left[\frac{1}{n} \sum_{t=0}^{n-1} \Risk\left(T^Y \circ f_t\right)\right] 
= \E_S\left[\frac{1}{n} \sum_{t=0}^{n-1} \E_{S_{t}}\Risk\left(T^Y \circ f_t\right)\right] \\
&= \E_S\left[\frac{1}{n} \sum_{t=0}^{n-1} \E_{S_{t}}\left(T^Y(f_{t}(\bx))-y\right)^2\right] 
= \E_S\left[\frac{1}{n} \sum_{t=0}^{n-1} \E_{S_{t}}\left(T^Y(f_{t}(\bx_{t+1}))-y_{t+1}\right)^2\right] \\
&= \E_S\left[\frac{1}{n} \sum_{t=1}^{n} \left(T^Y(f_{t-1}(\bx_{t}))-y_{t}\right)^2\right]~.
\end{align*}

Denote by $d_t'=\frac{d_t}{\lambda n}$, $\ell'_t=(T_Y(f_{t-1}(\bx_t)) - y_t)^2$, and $\ell_t=(f_{t-1}(\bx_t) - y_t)^2$.
We have that
\begin{align*}
\E_S\left[\frac{1}{n} \sum_{t=1}^{n} \left(T^Y(f_{t-1}(\bx_{t}))-y_{t}\right)^2\right]
&=\E_S\left[\frac{1}{n}\sum_{t=1}^n \ell'_t\right]
= \E_S\left[\frac{1}{n}\sum_{t=1}^n \frac{\ell_t' d_t'}{1+d_t'}\right] + \E_S\left[\frac{1}{n}\sum_{t=1}^n \frac{\ell_t'}{1+d_t'}\right] \\
&\leq \E_S\left[\frac{1}{n}\sum_{t=1}^n \frac{\ell_t' d_t'}{1+d_t'}\right] + \E_S\left[\frac{1}{n}\sum_{t=1}^n \frac{\ell_t}{1+d_t'}\right]~.
\end{align*}
We now focus on the first sum in the last inequality and we upper bound it in two different ways.
First, using Lemma~\ref{lemma:sum_dt} and Lemma~\ref{lemma:logdet}, we have
\begin{align*}
\E_S\left[\frac{1}{n}\sum_{t=1}^n \frac{\ell_t' d_t'}{1+d_t'}\right]
&\leq 4 Y^2 \E_S\left[\frac{1}{n}\sum_{t=1}^n \frac{d_t'}{1+d_t'}\right]
\leq \frac{4 Y^2}{n} \E_S\left[\ln \frac{|\lambda I + \frac{1}{n} K_n|}{|\lambda I|}\right] \\
&\leq \frac{4Y^2}{n}\min\left(\ln^{1-b}\left(1+\frac{1}{\lambda}\right),\frac{1}{b}\right)\frac{\Tr[L_K^b]}{\lambda^b}~.
\end{align*}
Also, we can upper bound the same term as 
\[
\E_S\left[\frac{1}{n}\sum_{t=1}^n \frac{\ell_t' d_t'}{1+d_t'}\right]
\leq \E_S\left[\frac{1}{n}\sum_{t=1}^n \frac{\ell_t d_t'}{1+d_t'}\right]
\leq \E_S\left[\frac{\max_t d_t'}{n}\sum_{t=1}^n \frac{\ell_t}{1+d_t'}\right]
\leq \frac{1}{\lambda n} \E_S\left[\fr{1}{ n}\sum_{t=1}^n \frac{\ell_t}{1+d_t'}\right]~.
\]

Now, using Theorems~\ref{thm:identity}~and~\ref{thm:approx_err} with the fact that $d_t\leq 1$, we bound the term $\E_S\left[\frac{1}{n}\sum_{t=1}^n \frac{\ell_t}{1+d_t'}\right]$ as
\begin{align*}
\E_S\left[\frac{1}{n}\sum_{t=1}^n \frac{\ell_t}{1+d_t'}\right]
&= \E_S\left[\min_{\HK} \lambda \|f\|^2 + \frac{1}{n}\sum_{t=1}^n (f(\bx_t)-y_t)^2\right] 
\leq \E_S\left[\lambda \|f_\lambda\|^2 + \frac{1}{n}\sum_{t=1}^n (f_\lambda(\bx_t) - y_t)^2\right] \\
&= \lambda \|f_\lambda\|^2 + \Risk(f_\lambda) 
= \min_{f \in \HK} \lambda \|f\|^2 + \Risk(f) -\Risk(\frho) + \Risk(\frho) \\
&\leq \lambda^{2 \beta} \|L^{-\beta}_K \frho\|_\rho^2 + \Risk(\frho)~.
\end{align*}

Putting all together, we have the stated bound.
\end{proof}

\section{Detailed Comparison with Previous Results}
\label{sec:rel}
The sheer volume of research on regression, see, e.g., \citet[Table 1]{LinC18}, precludes a complete survey of the results.
In this section, we focus on the closely related ones that involve infinite dimensional spaces.

First, it is useful to compare our convergence rate to the one we would get from known guarantees for KRR.
We can compare it to the stability bound in \citet{Shalev-ShwartzBD14} for KRR:
\[
\E_S\left[\Risk\left(f^{\text{KRR}}_{S,\lambda}\right)\right] 
\leq \left(1+\frac{192}{\lambda n}\right)\E_S\left[\frac{1}{n} \sum_{t=1}^n \left(f^{\text{KRR}}_{S,\lambda}(\bx_t)-y_t\right)^2 \right]~.
\]
It is easy to see\footnote{For completeness, the proof is in Theorem~5 in the Appendix.} that this bound implies the following convergence rate
\[
\E_S\left[\Risk\left(f^{\text{KRR}}_{S,\lambda}\right)\right] - \Risk(\frho)
\leq \left(1+\frac{192}{\lambda n}\right) \lambda^{2 \beta} \|L^{-\beta}_K \frho\|_\rho^2 + \frac{192 \Risk(\frho)}{\lambda n}~.
\]
This convergence rate matches only half of our bound. In particular, it does not contain the term that depends in the capacity of the RKHS through $b$. Also, the theorem in \citet{Shalev-ShwartzBD14} holds only for $\lambda\geq\frac{4}{m}$. This essentially prevents the setting of $\lambda=0$ and the possibility to achieve the rate of $n^{-1}$ in the case that $\beta=\frac12$ and $\Risk(\frho)=0$.

Another similar bound is the Leave-One-Out analysis in \citet{Zhang03}, which gives
\[
\E_S\left[\Risk\left(f^{\text{KRR}}_{S,\lambda}\right)\right]
\leq \left(1+\frac{2}{\lambda n}\right)^2 \min_{f \in \HK} \ \lambda \|f\|^2 + \frac{1}{n}\sum_{t=1}^n \left(f(\bx_t)-y_t\right)^2~.
\]
As for the stability bound, using Theorem~\ref{thm:approx_err}, this bound immediately implies the following bound for $\lambda>0$:
\[
\E_S\left[\Risk\left(f^{\text{KRR}}_{S,\lambda}\right)\right] - \Risk(\frho)
\leq \left(1+\frac{2}{\lambda n}\right)^2 \lambda^{2 \beta} \|L^{-\beta}_K \frho\|_\rho^2 + \left(\frac{4}{\lambda n}+\frac{4}{\lambda^2 n^2}\right)\Risk(\frho)~.
\]
Hence, this bound suffers from the same problems of the stability bound; it is suboptimal with respect to the capacity of the space and the presence of the square always makes the $\lambda$ that minimizes the risk bound bounded away from zero.

The best known results for nonparametric least square under Assumptions 1--3 are obtained by KRR~\citep{LinRRC18} and by stochastic least square~\citep{DieuleveutB16}, with the following rate
\[
\E_S\left[\Risk\left(f_{S,\lambda}\right)\right] - \Risk(\frho)
\leq 
\begin{cases}
O\left(n^{-\frac{2\beta}{2\beta+b}}\right), \text{ if } 2\beta+b\geq1,\\
O\left(n^{-2\beta}\right), \text{ otherwise.}
\end{cases}
\]
This kind of rates are suboptimal in the regime $2\beta+b<1$.
In contrast, our result achieves the optimal rate in all regimes. Also, these rates do not depend in any way on the risk of the optimal function $f_\rho$. Hence, they never support the choice of a regularization parameter being zero.
\citet{Pillaud-VivienRB18} call the regime $2\beta+b<1$ the ``hard'' problems and prove that SGD with multiple passes achieves the optimal rate for a subset of the hard problems
However, their result makes an additional assumption on the infinity norm of the functions in $\HK$.
Under the same assumption, \citet{SteinwartHS09} present a convergence rate of $O(n^{-\frac{2\beta}{2\beta+b}})$ in all regimes for truncated KRR.

The only result we are aware of that shows an acceleration in the low noise case is \citet{Orabona14}. Using a SGD-like procedure that does not require to set parameters, he proves a rate of $O(n^{-\frac{2\beta}{2\beta+1}})$ that accelerates to $O(n^{-\frac{2\beta}{\beta+1}})$ when $\Risk(\frho)=0$, for smooth and Lipschitz losses.

Turning to KRR used for classification, in the extreme case of the Tsybakov's noise condition (also called Massart low noise condition~\citep{massart06risk}) \citet{YaoRC07} proved an exponential rate of convergence. However, this is specific to the classification case only and it does not apply to the regression setting. Under stronger assumptions, i.e. data separable with margin, the same effect was already proved in \citet{Zhang01}. It is also interesting to note that these results require a non-zero implicit or explicit regularization.

More recently, \citet{hastie19surprises} showed\footnote{To see this, set $\sigma^2=0$ in~\citet[Theorem 6]{hastie19surprises}.} an asymptotic result (as $n\rarrow\infty$) that the best regularization parameter $\lambda$ of ridge regression is $0$ when there is no label noise (i.e., $\Risk(f_\rho) = 0$) and $\beta=\frac{1}{2}$.
Their result aligns well with ours, but we are not limited to asymptotic regimes nor finite dimensional spaces. On the other hand, our guarantee is an upper bound on the risk rather than an equality.

\section{Empirical Validation}
\label{sec:exp}

\begin{figure}
\label{fig:exp}
\centering
\begin{tabular}{cc}
\includegraphics[width=0.45\linewidth]{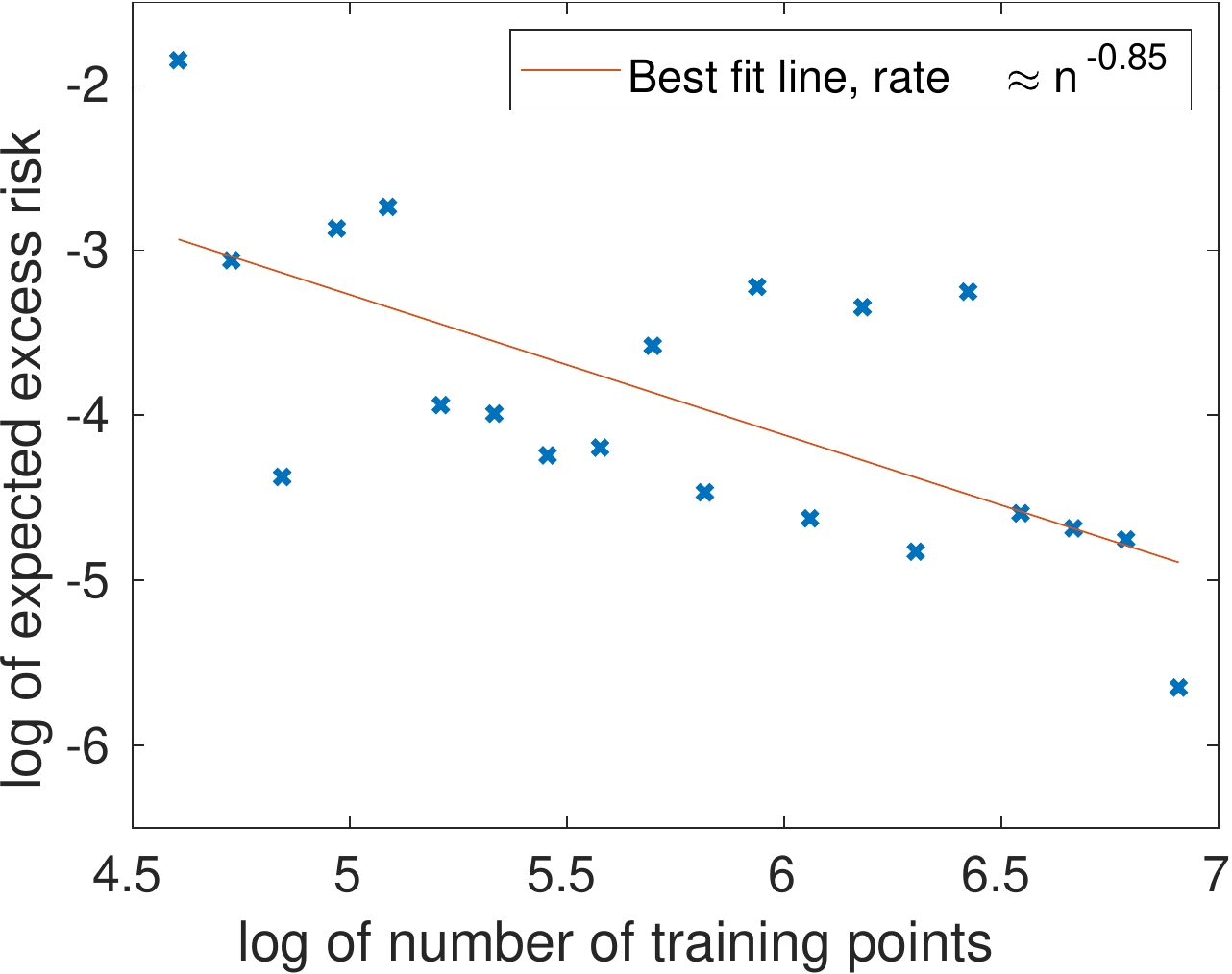} &
\includegraphics[width=0.45\linewidth]{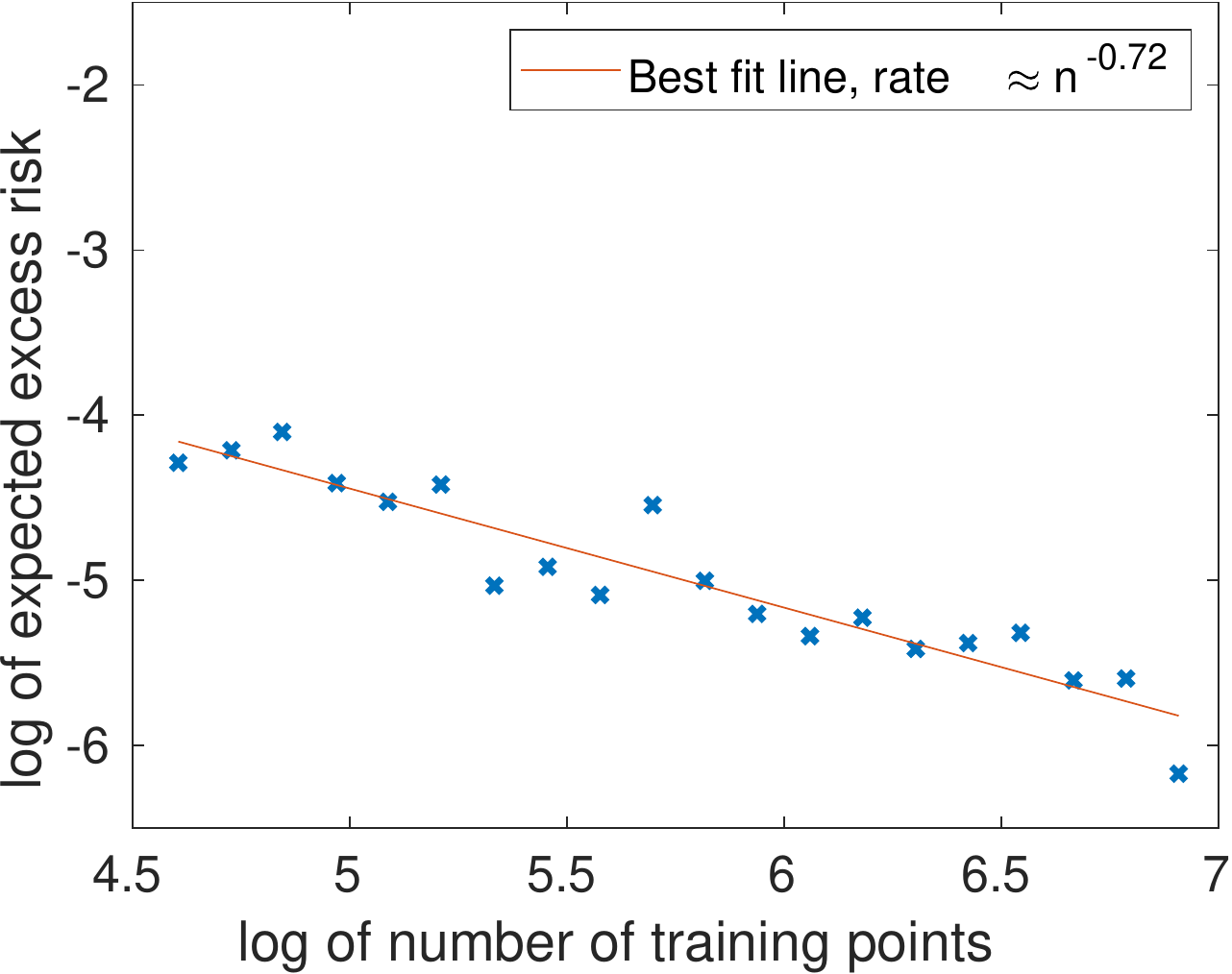}
\end{tabular}
\caption{Expected excess risk of KTR$^3$ vs the number of training points on a synthetic dataset with a spline kernel. Left and right figures show two different difficulties of the task, as parametrized by $\beta$ and $b$.}
\end{figure}

In this section, we empirically validate some of our theoretical findings.
Inspired by~\citet{Pillaud-VivienRB18}, we consider a spline kernel of order $q \ge 2$ where $q$ is even~\cite[Eq. (2.1.7)]{wahba90spline}.
Specifically, we define
\[
  \Lambda_q(x,x') = 1 + 2 \sum_{k=1}^\infty \fr{\cos (2\pi k(s-t))}{(2\pi k)^{q}}~.
\]
and use the kernel $K(x,x') = \Lambda_{1/b}(x,x')$ for some $b \in [0,1]$.
We consider the uniform distribution $\rho_\fX$ on $\fX = [0,1]$ and define the target function to be $f^\star(x) = \Lambda_{\fr{\beta}{b} + \fr12}(x,0)$ for $x\in\fX$.
We define the observed response of $x$ to be $f^\star(x) + B$ where $B$ is a uniform random variable $[-\eps,\eps]$.
One can show that this problem satisfies Assumptions 1--3~\citep{Pillaud-VivienRB18}.

For each $n$ in fine-grained grid points in $[10^2, 10^3]$ and $\lambda$ in another fine-grained set of numbers, we draw $n$ training points, compute $f_n$ by Algorithm~\ref{alg:RTKRR}, and estimate its excess risk by a test set.
Finally, for each $n$ we choose the $\lambda$ that minimizes the average excess risk.
We repeat the same 5 times.
First, we set $b=\frac{1}{8}$ and $\beta=\frac{7}{16}$, and $\eps=0.1$. Figure~\ref{fig:exp}(Left) plots the excess risk of the best $\lambda$'s vs $n$, which approximately achieves the predicted rate $n^{-\fr{7}{8}}$.

To verify our improved rate in the regime $2\beta + b < 1$, we also consider the case of $\beta = \fr14$, $b = \fr16$, and $\eps=0.1$.
Figure~\ref{fig:exp}(Right) plots the excess risk of the best $\lambda$'s vs $n$, which approximately achieves the predicted rate $n^{-\fr{3}{4}}$ rather than the slow rate $n^{-\fr{1}{2}}$ of prior art.\footnote{
  We remark that the considered kernel satisfies an extra assumption (e.g., \citet[Assumption (A3)]{Pillaud-VivienRB18}) that in fact allows KRR to achieve the same optimal rate as ours. We are not aware of simple problems where that condition is not satisfied.
  However, our theory clearly does not make such an assumption yet achieves the optimal rate. 
}

\section{Discussion and Open Problems}
\label{sec:conc}

We have presented a new algorithm for kernel-based nonparametric least squares that achieves optimal generalization rates with respect to the source condition and complexity of the RKHS. Moreover, faster rates are possible when the Bayes risk is zero, even when the optimal predictor is not in $\HK$.

The most natural open problem is to prove similar guarantees for KRR. We conjecture that the randomization used in our analysis is not strictly necessary; it only greatly simplifies the proof. It would also be interesting to prove lower bounds for the $\Risk(\frho)=0$ case, to understand if the obtained rates are optimal or not. Furthermore, alleviating the boundedness assumption (Assumption~\ref{assumption:bounded}) would be interesting, possibly with some mild moment conditions that appear in~\citet{hsu12random}, \citet{audibert11robust} and \citet{hsu16loss}.

A consequence of our work is that it shows a gap between the best-known bounds for SGD and ERM-based algorithms. Indeed, before this work, the rates of SGD and ERM-based algorithms (e.g., KRR) under Assumptions 1--3 were the same. It would be interesting to understand if some variants of SGD can achieve the optimal rates or if there is indeed a clear separation between the rates.

The limitation of this work is mainly with regards to the parametrization of the problem via the source condition and the complexity of the RKHS. Specifically, our rates are only valid for $\beta\leq 1/2$ (see Assumption 2), due to use of Theorem~\ref{thm:approx_err}. However, this is unlikely to be a limitation of the analysis but rather a consequence of the use of a regularizer and the consequent ``saturation'' phenomenon, see discussion in \citet{YaoRC07}.
Another limitation of our framework is that it is well-known that the guarantee on the approximation error in Theorem~\ref{thm:approx_err} is non-trivial for a Gaussian kernel with fixed bandwidth only if $\frho \in C^\infty$~\citep{SmaleZ03}. 
While this is a strong condition from a mathematical point of view, it is unclear how strong it is for real-world problems, where the bandwidth of the Gaussian kernel is often tuned. 

Finally, we believe the assumptions considered too strong in the theory community can be reconsidered with modern machine learning tasks. 
Indeed, most results in the community have ignored the case of $\Risk(\frho)=0$, perhaps due to the fact that it was considered too strong as a condition. 
However, most of the visual perception tasks on which modern machine learning has been successful seem to satisfy this assumption; for example, humans have zero or very close to zero error in recognizing cats versus dogs from a photograph.
In this view, a more ambitious open problem is to find the correct characterization of ``easiness'' for real-world problems, rather than using mathematically appealing ones.

\bibliographystyle{plainnat}
\bibliography{../../../../learning}

\appendix

\section{Auxiliary Lemmas}

The following result will be used in the proof of Lemma~\ref{lemma:logdet}.
\begin{lemma}
\label{lemma:log_bound}
Let $0\leq x\leq M$. Then, for all $b \in [0,1]$, we have
\[
\ln (1+x) 
\leq \min_{b\leq a\leq 1} \left(\frac{a}{b}\right)^a \ln^{1-a}(1+M) x^b
\leq \min\left(\frac{1}{b},\ln^{1-b}(1+M)\right) x^b~.
\]
\end{lemma}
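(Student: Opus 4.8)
The plan is to prove the two inequalities separately, with essentially all the work going into the first. The second inequality is immediate: a minimum over $a\in[b,1]$ is bounded by the value of the expression at any single point, so evaluating $(a/b)^a\ln^{1-a}(1+M)$ at the endpoints $a=b$ (which gives $\ln^{1-b}(1+M)$) and $a=1$ (which gives $1/b$) already yields $\min_{b\le a\le 1}(a/b)^a\ln^{1-a}(1+M)\le\min(1/b,\ln^{1-b}(1+M))$, and multiplying through by $x^b\ge0$ finishes it. So I would concentrate on showing, for each fixed $a\in[b,1]$,
\[
\ln(1+x)\le \left(\frac{a}{b}\right)^a \ln^{1-a}(1+M)\, x^b,
\]
and then take the minimum over $a$ to recover the first inequality.

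The elementary engine of the proof is the following: for any exponent $c\in(0,1]$ and any $x\ge0$,
\[
\ln(1+x)=\frac1c\ln\big((1+x)^c\big)\le \frac1c\big((1+x)^c-1\big)\le \frac1c\, x^c,
\]
where the first inequality is $\ln t\le t-1$ and the second is the subadditivity $(1+x)^c\le 1+x^c$, valid for $c\in[0,1]$. This step is routine.

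The crux, and the step I would expect to be the main obstacle, is the right multiplicative decomposition of the logarithm. I would write $\ln(1+x)=\ln^{1-a}(1+x)\cdot\ln^{a}(1+x)$. For the first factor, monotonicity of $t\mapsto\ln(1+t)$ together with $x\le M$ and $1-a\ge0$ gives $\ln^{1-a}(1+x)\le\ln^{1-a}(1+M)$. For the second factor I would apply the elementary bound with the exponent $c=b/a$, which lies in $(0,1]$ exactly because $b\le a$; this gives $\ln(1+x)\le\frac{a}{b}\,x^{b/a}$ and hence $\ln^{a}(1+x)\le(a/b)^a x^b$. Multiplying the two factor bounds yields precisely the displayed per-$a$ estimate, and $\min_{b\le a\le1}$ then recovers the first inequality. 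The only degenerate case is $b=0$, where $x^b=1$ and the claim collapses to $\ln(1+x)\le\ln(1+M)$, immediate from $x\le M$; the intermediate expression is read in the limiting sense $(a/b)^a\to1$ as $a\downarrow0$.
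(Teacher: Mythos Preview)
Your proposal is correct and follows essentially the same approach as the paper: the same multiplicative decomposition $\ln(1+x)=\ln^{1-a}(1+x)\,\ln^{a}(1+x)$, the same monotonicity bound on the first factor, and the same elementary inequality $\ln(1+x)\le \tfrac{1}{c}x^c$ with $c=b/a$ on the second. You add a short derivation of that elementary inequality and explicitly handle the endpoints $a=b,1$ and the degenerate case $b=0$, but the structure is identical.
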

\begin{proof}
For all $a$ in $[0,1]$, we have
\[
\ln (1+x) 
= \ln^a(1+x) \ln^{1-a}(1+x)
\leq \ln^a(1+x) \ln^{1-a}(1+M) 
\leq \left(\frac{a}{b}\right)^a \ln^{1-a}(1+M) x^b,
\]
where in the last inequality we used the inequality $\ln(1+x) \leq \frac{1}{y} x^y, \forall y\leq1$, with $y=\frac{b}{a}$.
\end{proof}

We can now prove Lemma~\ref{lemma:logdet}.
\begin{proof}[Proof of Lemma~\ref{lemma:logdet}]
For this proof, we need some additional notation related to learning in RKHS.
Defining $K_{\bx}:=K(\cdot,\bx)$, we have the covariance operator and its empirical version
\begin{align*}
T_K = \int_{\fX} \langle \cdot, K_{\bx}\rangle K_{\bx} d \rho(\bx) && \text{and} &&
T_n = \frac{1}{n} \sum_{\bx_t \in S} \langle \cdot, K_{\bx_t}\rangle K_{\bx_t}~.
\end{align*}
We have that $T_K$ is positive, self-adjoint, and trace class~\citep[Proposition 8]{RosascoBV10}. Note that $T_K$ has domain and range equal to $\HK$.

$T_K$ and $L_K$ are related operators, indeed they can be written as $R^\star R$ and $RR^\star$ respectively, for an appropriate operator $R$, where $R^\star$ denotes the adjoint of $R$~\citep{RosascoBV10}. For our aims, it is enough to note that $\E_S[T_n]=T_K$ and $L_K$ and $T_K$ have the same non-zero eigenvalues~\citep[Proposition 8]{RosascoBV10} and $T_n$ and $\frac{1}{n}K_n$ have the same non-zero eigenvalues~\citep[Proposition 9]{RosascoBV10}.

Hence, using the concavity of the log det with Jensen's inequality and the above observations, we have
\begin{align*}
\E_S\left[\ln \frac{|\lambda I_n + \frac{1}{n} K_n|}{|\lambda I_n|} \right]
&=\E_S\left[\ln \left|I_n + \frac{1}{\lambda n} K_n\right| \right]
\leq \ln \left|I_n + \frac{1}{\lambda} T_K\right| 
= \sum_{i=1}^\infty \ln\left(1+\frac{\lambda_i}{\lambda}\right) \\
&\leq \min\left(\ln^{1-b}\left(1+\frac{1}{\lambda}\right),\frac{1}{b}\right)\frac{\sum_{i=1}^\infty \lambda_i^b}{\lambda^b} 
= \min\left(\ln^{1-b}\left(1+\frac{1}{\lambda}\right),\frac{1}{b}\right)\frac{\Tr[L_K^b]}{\lambda^b},
\end{align*}
where $\lambda_i$ are the eigenvalues of $L_K$ and in the second inequality we used Lemma~\ref{lemma:log_bound}.

In the case of $b=0$, we know that there exists a finite number of nonzero eigenvalues of $L_K$ since otherwise Assumption~\ref{assumption:eigen} is violated.
Let $d' = \Tr[L_K^0]$.
Then,
\begin{align*}
\sum_{i=1}^\infty \ln\lt(1 + \fr{\lam_i}{\lam}\rt)
&= \sum_{i=1}^{d'} \ln\lt(1 + \fr{\lam_i}{\lam}\rt)
= {d'} \ln \lt( \lt( \prod_{i=1}^{d'} \lt(1 + \fr{\lam_i}{\lam}\rt) \rt)^{\fr{1}{{d'}}} \rt) 
\le {d'} \ln \lt( \fr1{d'} \sum_{i=1}^{d'} \lt(1 + \fr{\lam_i}{\lam} \rt) \rt) \\
&\le {d'} \ln \lt( 1 + \fr{1}{{d'}\lam} \rt),
\end{align*}
where in the first inequality we used the inequality of arithmetic and geometric means.
\end{proof}

\begin{theorem}
\label{thm:stability}
Let $A>0$ and denote by $f_{S,\lambda}$ the solution of KRR over a training set $S$ and parameter $\lambda>0$. When Assumption 2 holds, the following inequality
\[
\E_S[\Risk(f_{S,\lambda})]\leq \left(1 + \frac{A}{\lambda n}\right) \frac{1}{n} \sum_{t=1}^n ( f_{S,\lambda}(\bx_t)-y_t)^2,
\]
implies that
\[
\E_S[\Risk(f_{S,\lambda})]\leq \left(1 + \frac{A}{\lambda n}\right) \lambda^{2 \beta} \|L^{-\beta}_K \frho\|_\rho^2 + \frac{A \Risk(\frho)}{\lambda n}~.
\]
\end{theorem}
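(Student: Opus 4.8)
The plan is to derive the excess-risk bound directly from the hypothesized stability inequality, using only the optimality of KRR on the training set and the approximation-error control of Theorem~\ref{thm:approx_err}. I read the assumed inequality with $\E_S$ applied to both sides, i.e.\ $\E_S[\Risk(f_{S,\lambda})] \le \left(1+\frac{A}{\lambda n}\right)\E_S\left[\frac{1}{n}\sum_{t=1}^n (f_{S,\lambda}(\bx_t)-y_t)^2\right]$, which is the natural form for an on-average stability result. Thus everything reduces to bounding the \emph{expected} empirical square error of $f_{S,\lambda}$.

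First I would compare $f_{S,\lambda}$ against the deterministic comparator $f_\lambda := \argmin_{f\in\HK}\lambda\|f\|^2+\Risk(f)$. Since $f_{S,\lambda}$ minimizes the regularized empirical objective, discarding its own penalty term gives, for every $S$,
\[
\frac{1}{n}\sum_{t=1}^n (f_{S,\lambda}(\bx_t)-y_t)^2 \le \lambda\|f_\lambda\|^2 + \frac{1}{n}\sum_{t=1}^n (f_\lambda(\bx_t)-y_t)^2~.
\]
Taking $\E_S$ and using that $f_\lambda$ is independent of $S$ while each $(\bx_t,y_t)$ is drawn i.i.d.\ from $\rho$ (so every summand has expectation $\Risk(f_\lambda)$) yields $\E_S\left[\frac{1}{n}\sum_{t=1}^n (f_{S,\lambda}(\bx_t)-y_t)^2\right] \le \lambda\|f_\lambda\|^2 + \Risk(f_\lambda)$. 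Applying Theorem~\ref{thm:approx_err}, which bounds $\lambda\|f_\lambda\|^2+\Risk(f_\lambda)-\Risk(\frho)\le \lambda^{2\beta}\|L_K^{-\beta}\frho\|_\rho^2$, then gives $\E_S\left[\frac{1}{n}\sum_{t=1}^n (f_{S,\lambda}(\bx_t)-y_t)^2\right] \le \lambda^{2\beta}\|L_K^{-\beta}\frho\|_\rho^2 + \Risk(\frho)$.

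Substituting this into the stability inequality produces $\E_S[\Risk(f_{S,\lambda})] \le \left(1+\frac{A}{\lambda n}\right)\left(\lambda^{2\beta}\|L_K^{-\beta}\frho\|_\rho^2 + \Risk(\frho)\right)$. The only step requiring any thought is the final one: subtracting $\Risk(\frho)$ from both sides, which turns $\left(1+\frac{A}{\lambda n}\right)\Risk(\frho)-\Risk(\frho)$ into $\frac{A}{\lambda n}\Risk(\frho)$ and recovers exactly the asymmetric coefficients of the claimed bound. I expect no genuine obstacle; the one subtlety is that the left-hand side of the conclusion should be read as the \emph{excess} risk $\E_S[\Risk(f_{S,\lambda})]-\Risk(\frho)$ (consistent with the main-text application with $A=192$), since otherwise the bound would be vacuous whenever $\lambda^{2\beta}\|L_K^{-\beta}\frho\|_\rho^2 < \Risk(\frho)$, because $\Risk(f_{S,\lambda})\ge\Risk(\frho)$ always holds.
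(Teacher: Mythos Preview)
Your proposal is correct and follows essentially the same route as the paper: both read the hypothesis with $\E_S$ on the right-hand side, use the optimality of $f_{S,\lambda}$ on the regularized empirical objective to pass to the deterministic comparator $f_\lambda=\argmin_{f\in\HK}\lambda\|f\|^2+\Risk(f)$, take expectations, apply Theorem~\ref{thm:approx_err}, and then subtract $\Risk(\frho)$. You also correctly flag that the conclusion is really a bound on the excess risk $\E_S[\Risk(f_{S,\lambda})]-\Risk(\frho)$, which is exactly what the paper's ``reordering the terms'' step amounts to.
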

\begin{proof}
As in the proof of Theorem~\ref{thm:main}, define $f_{\lambda} = \argmin_{f \in \HK} \ \lambda \|f\|^2 + \frac{1}{n} \sum_{t=1}^n (f(\bx_t)-y_t)^2$. Then, we have
\begin{align*}
\E_S[\Risk(f_{S,\lambda})]
&\leq \left(1 + \frac{A}{\lambda n}\right) \E_S\left[\frac{1}{n} \sum_{t=1}^n ( f_{S,\lambda}(\bx_t)-y_t)^2 \right]\\
&\leq \left(1 + \frac{A}{\lambda n}\right) \E_S\left[\lambda \|f_{S,\lambda}\|^2 + \frac{1}{n} \sum_{t=1}^n ( f_{S,\lambda}(\bx_t)-y_t)^2 \right]\\
&\leq \left(1 + \frac{A}{\lambda n}\right) \E_S\left[\lambda \|f_{\lambda}\|^2 + \frac{1}{n} \sum_{t=1}^n ( f_{\lambda}(\bx_t)-y_t)^2 \right]\\
&= \left(1 + \frac{A}{\lambda n}\right) \left(\lambda \|f_{\lambda}\|^2 + \Risk(f_\lambda) \right)\\
&= \left(1 + \frac{A}{\lambda n}\right) \left(\lambda \|f_{\lambda}\|^2 + \Risk(f_\lambda) -\Risk(\frho)\right) + \Risk(\frho) + \frac{A}{\lambda n} \Risk(\frho)\\
&\leq \left(1 + \frac{A}{\lambda n}\right) \lambda^{2 \beta} \|L^{-\beta}_K \frho\|_\rho^2  + \Risk(\frho) + \frac{A}{\lambda n} \Risk(\frho),
\end{align*}
where in the last inequality we used Theorem~\ref{thm:approx_err}.
Reordering the terms, we have the stated bound.
\end{proof}

\section{Proof of Theorem~\ref{thm:main-lambda-0}}

Let us first state the motivation.
Note that Theorem~\ref{thm:main} does not work with $\lambda=0$.
However, if we assume $\lambda=0$ works for now, then under the stated conditions of this corollary, the excess risk bound of Theorem~\ref{thm:main} is $\lam\|L_K^{-1/2} f_\rho\|_\rho^2 + \fr1n \| L_K^{-1/2}f_\rho \|_\rho^2$.
This implies that $\lam=0$ is the optimal choice, resulting in the desired bound 
\[
\fr1n \| L_K^{-1/2} f_\rho \|_\rho^2 = \fr1n \|f_\rho\|^2~.
\]
Since one cannot set $\lambda=0$, the goal here is, therefore, to show rigorously that the desired bound above is achieved when $\lam$ is exactly 0.

We will use the following definitions.
Define the sampling operator $R_n:\HK \rightarrow \R^n$ as $R_n f = [f(\bx_1), \dots, f(\bx_n)]$, where $\bx_1, \dots, \bx_n \in S$.
The adjoint operator $R^*_n:\R^n \rightarrow \HK$ is $R^*_n \bw = \sum_{i=1}^n w_i K(\bx_i,\cdot)$. Hence, we have 
\[
\langle R^*_n \bw, f\rangle = \sum_{i=1}^n w_i f(\bx_i),
\]
where $\bw \in \R^n$.
This allows to redefine the Gram matrix over the $n$ samples of $S$ as $K_n=R_n R_n^*$~\citep{RosascoBV10}.

Now, the assumption that $\Risk(\frho)=0$ implies that $y_i=\frho(\bx_i)$, $\rho_X$-almost surely.
Hence, we can write $R \frho = \by =[y_1,\dots,y_n]^\top$, the vector of labels in the training set $S$.
So, $\rho_X$-almost surely, we can write that
\[
\by^\top K_n^{-1} \by 
= \langle R^*_n (R_n R^*_n)^{-1} R_n \frho, \frho \rangle,
\]
where we used the fact that $\beta=\frac{1}{2}.$

We claim that 
\begin{align}\label{eq:cor-proof-claim-1}
\E_S[\by^\top K_n^{-1} \by]\leq \|\frho\|^2 = \|L^{-1/2}_K \frho\|^2_\rho~.
\end{align}
From the assumption that $K_n$ is full rank, we have that the operator $Q=R^*_n (R_n R^*_n)^{-1} R_n$ is a projection operator, because $Q^2=Q$, and its eigenvalues are 0 or 1. 
This implies that $\langle R^*_n (R_n R^*_n)^{-1} R_n\frho, \frho \rangle\leq \|\frho\|^2$,  $\rho_X$-almost surely, which proves the claim.

Let us use the notation $\ell_t$ and $\ell'_t$ defined in the proof of Theorem~\ref{thm:main}.
By~\citet[Corollary 1]{ZhdanovK13}, we have the following well defined for $\lambda\ne0$:
\begin{align}
\fr{1}{n}\sum_{t=1}^n \fr{\ell_t}{1 + d_t/(n\lambda)} = \min_{f\in\HK} \fr1n \sum_{t=1}^n (f(\bx_t) - y_t)^2 + \lambda \|f\|^2 = \lambda \by^\T (K_n + n\lambda I)^{-1}\by~.
\end{align}

Define $A_\lam = \by^\T (K_n + n\lambda I)^{-1} \by$.
We claim that
\begin{align}\label{eq:cor-proof-claim-2}
\sum_{t=1}^n \fr{\ell_t}{n\lam + d_t} = A_\lam \text{\quad for \quad}\lam=0~.
\end{align}
To show this, it suffices to show that $\lim_{\lam\rarrow0}\sum_{t=1}^n \fr{\ell_t}{n\lam + d_t} = \lim_{\lam\rarrow0} A_\lam$ since both sides are continuous at $\lam=0$.
Then,
\begin{align*}
1 = \fr{\lim_{\lam\rarrow0} \sum_t \fr{\ell_t}{1+dt/(n\lam)}}{\lim_{\lam\rarrow0} n\lambda A_\lam}
\stackrel{(a)}{=} \lim_{\lam\rarrow 0} \fr{\sum_t \fr{\ell_t}{1+d_t/(n\lam)}}{n\lam A_\lam} 
= \lim_{\lam\rarrow0} \fr{\sum_t \fr{\ell_t}{n\lam +d_t}}{A_\lam}
\stackrel{(b)}{=}  \fr{\lim_{\lam\rarrow0} \sum_t \fr{\ell_t}{n\lam +d_t}}{\lim_{\lam\rarrow0} A_\lam}~.
\end{align*}
where both $(a)$ and $(b)$ is due to the fact that the existence of $\lim_{x\rarrow0} g(x)$ and $\lim_{x\rarrow0} h(x)$ implies $\lim_{x\rarrow0} \fr{g(x)}{h(x)} =  \fr{\lim_{x\rarrow0}g(x)}{\lim_{x\rarrow0}h(x)} $.
This proves the claim.

Now, when $\lam=0$, we have the following bound on the online average loss:
\begin{align*}
\fr1n \sum_{t=1}^n \ell'_t 
\le \fr1n \sum_{t=1}^n \ell_t 
= \fr1n \sum_{t=1}^n \ell_t \cdot \fr{d_t}{d_t}
\stackrel{(a)}{\le} \fr{1}{n} \sum_{t=1}^n \fr{\ell_t}{d_t}
\stackrel{\eqref{eq:cor-proof-claim-2}}{=} \fr{1}{n} A_0~.
\end{align*}
where $(a)$ is by $d_t \le 1$.
This implies that
\begin{align*}
\E[ \Risk(f_{S,0}) ]
= \E \lt[ \fr1n \sum_{t=1}^n \ell'_t \rt]
\le \fr1n \E [A_0]
\stackrel{\eqref{eq:cor-proof-claim-1}}{\le} \fr1n \|f_\rho\|^2~.
\end{align*}

\section{Proof of Corollary~\ref{cor:main}}

The proof is based on the risk bound of Theorem~\ref{thm:main} that is minimum of multiple bounds, each of which achieving the minimum depending on the given problem parameters.

First, we have the risk bounded by
\begin{align}\label{eq:cor-proof}
\lambda^{2 \beta} \|L^{-\beta}_K \frho\|_\rho^2 
+ \frac{\lambda^{2 \beta-1 }\|L^{-\beta}_K \frho\|_\rho^2}{n} + \frac{\Risk(\frho)}{\lambda n}~.
\end{align}
Note that $\lambda$ has to be decreasing in $n$ since otherwise the first term remains constant.
This means that the second term is dominated by the third term for a large enough $n$.
Thus, it remains to find $\lambda$ that minimizes $\lambda^{2 \beta} \|L^{-\beta}_K \frho\|_\rho^2 
+ \frac{\Risk(\frho)}{\lambda n} $ using elementary algebra.
The minimum is
\begin{align*}
\Theta\lt( \lt(\fr{\Risk(f_\rho)}{n}\rt)^{\fr{2\beta}{2\beta+1}} \cdot ( \|L^{-\beta}_K \frho\|_\rho^2 )^{\fr{1}{2\beta+1}} \rt)
\text{\quad with \quad} \lambda = \lt( \fr{\fr1n \Risk(f_\rho)}{2\beta \|L^{-\beta}_K \frho\|_\rho^2} \rt)^{\fr{1}{2\beta+1}}~.
\end{align*}

Second, in the case where $\Risk(f_\rho)=0$, the third term of~\eqref{eq:cor-proof} disappears.
Thus, it remains to find $\lambda$ that minimizes $\lambda^{2 \beta} \|L^{-\beta}_K \frho\|_\rho^2 
+ \frac{\lambda^{2 \beta-1 }\|L^{-\beta}_K \frho\|_\rho^2}{n} $.
The minimum is
\begin{align*}
\Theta\lt( \|L^{-\beta}_K \frho\|_\rho^2 \cdot n^{-2\beta} \rt)
\text{\quad with \quad}
\lambda = \fr1n \lt(\fr1{2\beta} - 1\rt)~.
\end{align*}
Note that the case of $\beta=1/2$ here cannot be derived by Theorem~\ref{thm:main} since the optimal $\lambda$ is exactly 0.
This case is instead supported by Theorem~\ref{thm:main-lambda-0}.

Third, we have another risk bound of
\begin{align*}
\lambda^{2 \beta} \|L^{-\beta}_K \frho\|_\rho^2 
+ \frac{4Y^2 \Tr[L_K^b] }{\lambda^b n \cdot b}~.
\end{align*}
The minimum is
\begin{align*}
\Theta\lt( \lt( \fr{Y^2 \Tr[L_K^b]}{nb} \rt)^{\fr{2\beta}{2\beta+b}} \lt( \|L^{-\beta}_K \frho\|_\rho^2 \rt)^{\fr{b}{2\beta+b}} \rt)
\text{ \quad with \quad} \lambda = \lt( \fr{bY^2 \Tr[L_K^b]}{2\beta n b \|L^{-\beta}_K \frho\|_\rho^2} \rt)^{\fr1{2\beta+b}}~,
\end{align*}
where we remark that it is possible to remove $(1/b)^{\fr{2\beta}{2\beta+b}}$ (which can very large when $b$ is small) in the minimum at the price of a logarithmic factor using another risk bound of $\lambda^{2 \beta} \|L^{-\beta}_K \frho\|_\rho^2   + \frac{4Y^2 \Tr[L_K^b] }{\lambda^b n} \ln^{1-b}\lt(1+\fr1\lambda\rt)$.

For the case of $b=0$ and $\beta=1/2$, one can derive a tighter risk bound from the proof of Theorem~\ref{thm:main} by invoking the second part of Lemma~\ref{lemma:logdet} instead of the first part:
\begin{align*}
\lambda \|L^{-1/2}_K \frho\|_\rho^2 + \fr{4Y^2}{n} \cdot \ln\left(1 + \frac{1}{\Tr[L^0_K]\lambda}\right) \Tr[L^0_K]~.
\end{align*}
Tuning $\lambda = \fr{1}{n \|L_K^{-1/2}f_\rho\|^2_\rho}$, we get a bound $\Theta( n^{-1} \Tr[L_K^0] \ln(1 + \fr{n}{\Tr[L_K^0]}) )$.

Clearly, given the problem parameters $\beta$, $b$, $Y$, $n$, $\Tr[L_K^b]$, and $\| L_K^{-\beta}f_\rho\|_\rho$, one can choose $\lambda$ that leads to the smallest risk bound among all of the above, which concludes the proof.

\end{document}